\documentclass[11pt]{article} % Standard article class

\newif\ifARXIV
\ARXIVtrue

\usepackage[margin=1.1in]{geometry}
\usepackage[T1]{fontenc}
\usepackage[round, authoryear]{natbib}
\usepackage{amsmath, amssymb, amsthm}
\usepackage{graphicx}
\usepackage[x11names]{xcolor}
\usepackage{url}
\usepackage[algo2e,ruled]{algorithm2e}
\usepackage{hyperref}

\newtheorem{theorem}{Theorem}
\newtheorem{lemma}[theorem]{Lemma}

\theoremstyle{definition}
\newtheorem{definition}{Definition}

\theoremstyle{remark}
\newtheorem{remark}{Remark}

\makeatletter
\@ifundefined{ifARXIV}{%
  \newif\ifARXIV
  \ARXIVfalse
}{}
\makeatother

\usepackage{times}
\usepackage[T1]{fontenc} 
\usepackage{etoolbox}
\usepackage{booktabs} 
\usepackage{amsfonts} 
\usepackage{nicefrac} 
\usepackage{microtype} 
\usepackage{xspace}
\usepackage{enumitem}
\usepackage{soul}
\usepackage{siunitx}
\usepackage{optidef}
\usepackage{dsfont}
\usepackage{bbm}
\usepackage{multirow}
\usepackage{tabularray}
\usepackage{tikz}
\usetikzlibrary{shapes, positioning, arrows.meta}
\usepackage{mathtools}
\usepackage{titlesec}
\titlespacing*{\section}{0pt}{1.5ex plus 0.5ex minus 0.2ex}{1ex plus 0.2ex}
\titlespacing*{\subsection}{0pt}{1.2ex plus 0.4ex minus 0.2ex}{0.8ex plus 0.2ex}
\DeclarePairedDelimiterX{\inp}[2]{\langle}{\rangle}{#1, #2}
\usepackage{adjustbox}
\usepackage{thm-restate}
\usepackage{pdflscape}
\usepackage[format=plain, justification=justified, singlelinecheck=false]{caption}
\newtheorem{assumption}{Assumption}

\DeclareMathOperator{\supp}{supp}

\DeclareMathOperator{\poly}{poly}

\definecolor{lightpink}{rgb}{1,0.9,0.9}
\definecolor{figblue}{RGB}{0,100,200}
\definecolor{figred}{RGB}{200,50,50}
\definecolor{figgreen}{RGB}{0,140,60}
\definecolor{figpurple}{RGB}{150,50,150}
\newcommand{\defeq}{\vcentcolon=}

\newcommand{\cB}{\mathcal B}

\newcommand{\cF}{\mathcal F}

\newcommand{\cL}{\mathcal L}

\newcommand{\cQ}{\mathcal Q}

\newcommand{\cS}{\mathcal S}

\newcommand{\bbR}{\mathbb R}

\newcommand{\one}{\mathds 1}
\newcommand{\AND}{$\mathsf{AND}$\xspace}
\newcommand{\OR}{$\mathsf{OR}$\xspace}

\newcommand{\bx}{\mathbf x}

\newcommand{\bv}{\mathbf v}

\newcommand{\ba}{\mathbf a}
\newcommand{\bb}{\mathbf b}
\newcommand{\bk}{\mathbf k}
\newcommand{\boldm}{\mathbf m}

\newcommand{\bM}{\mathbf M}

\newcommand{\bH}{\mathbf H}
\newcommand{\bh}{\mathbf h}

\newcommand{\bz}{\mathbf z}

\newcommand{\PASMT}{PASMT\xspace}
\newcommand{\FASMT}{FASMT\xspace}
\newcommand{\GBSA}{\text{GBSA}\xspace}

\newcommand{\bell}{\boldsymbol{\ell}}
\newcommand{\bZero}{\boldsymbol 0}
\newcommand{\bOne}{\boldsymbol 1}
\newcommand{\beps}{\boldsymbol{\epsilon}}

\hypersetup{
    colorlinks=true,
    linkcolor=blue,
    citecolor=blue,
    urlcolor=magenta,
}

\makeatletter
\def\blfootnote{\gdef\@thefnmark{}\@footnotetext}
\makeatother

\title{Adaptive Sparse Möbius Transforms for Learning Polynomials}

\author{
    Yigit Efe Erginbas \hspace{14pt}
    Justin Singh Kang \hspace{14pt}
    Elizabeth Polito \hspace{14pt}
    Kannan Ramchandran \\
    \\
    UC Berkeley
}

\date{}

\begin{document}

\maketitle

% --- Abstract ---
\begin{abstract}
We consider the problem of exactly learning an $s$-sparse real-valued Boolean polynomial of degree $d$ of the form $f:\{ 0,1\}^n \rightarrow \mathbb{R}$. This problem corresponds to decomposing functions in the \AND  basis and is known as taking a \emph{M\"obius transform}.
While the analogous problem for the \emph{parity} basis (Fourier transform) $f: \{-1,1 \}^n \rightarrow \mathbb{R}$ is well-understood, the \AND basis presents a unique challenge: the basis vectors are coherent, precluding standard compressed sensing methods. We overcome this challenge by identifying that we can exploit adaptive group testing to provide a constructive, query-efficient implementation of the M\"obius transform (also known as M\"obius inversion) for sparse functions.
We present two algorithms based on this insight. 
The \emph{Fully-Adaptive Sparse M\"obius Transform} (\FASMT) uses $O(sd \log(n/d))$ adaptive queries in $O((sd + n) sd \log(n/d))$ time, which we show is near-optimal in query complexity. Furthermore, we also present the \emph{Partially-Adaptive Sparse M\"obius Transform} (\PASMT), which uses $O(sd^2\log(n/d))$ queries, trading a factor of $d$ to reduce the number of adaptive rounds to $O(d^2\log(n/d))$, with no dependence on $s$. 
When applied to hypergraph reconstruction from \emph{edge-count queries}, our results improve upon baselines by avoiding the combinatorial explosion in the rank $d$. 
We demonstrate the practical utility of our method for hypergraph reconstruction by applying it to learning real hypergraphs in simulations.
\end{abstract}
\blfootnote{Authors are listed in alphabetical order.}

% --- Main Content ---
\section{Introduction}

Learning sparse functions over the Boolean hypercube is a cornerstone problem in theoretical computer science and Fourier analysis has become an indispensable tool in the field \citep{o2021analysis}.
Central to the success of Fourier analysis is the \emph{orthogonality} of parity functions. This structural property underpins classic results \citep{goldreich1989hard} and serves as the basis for modern compressed sensing \citep{candes2005decoding, donoho2006compressed}.
However, the Fourier basis is an inefficient representation for the class of functions dominated by \emph{local} conjunctions (e.g., \AND functions), due to the \emph{global} nature of parity functions. 
To illustrate, a $d$-bit \AND function requires a full spectrum of $2^d$ non-zero Fourier coefficients. Because hypergraphs, monotone DNFs and cooperative games are composed of these elements, their Fourier transforms become prohibitively dense. This forces most standard learning approaches to run in time proportional to $2^d$.

\paragraph{The M\"obius representation.} To avoid this spectral explosion, we forego the Fourier representation in favor of the Möbius representation, which uses an \AND basis. The locality of the \AND basis preserves natural sparsity for the aforementioned classes of functions.
Formally, we consider the problem of exactly learning $s$-sparse polynomials of degree $d$ of the form $f: \{0,1\}^n \to \mathbb{R}$ (i.e., a real-valued Boolean function). This problem corresponds to decomposing a function in the \AND basis and subsumes the problem of reconstructing hypergraphs from edge-counting queries.
In this setting, a polynomial evaluation query corresponds to an induced subgraph query, which returns the sum of weights (or count of edges) within a queried subset of vertices.

The M\"obius representation introduces a fundamental geometric challenge: \emph{basis coherence}.
Consider two basis functions $\phi_{A}$ and $\phi_{B}$ corresponding to $A \subset B$. In the Fourier basis, these functions are orthogonal. In the \AND basis, however, they exhibit a coherence structure defined by the subset lattice \citep{rota1964foundations}. 
This coherence precludes the use of standard compressed sensing and random constructions (see Appendix~\ref{app:random_const}).
Previous works have navigated this lattice structure when learning monotone DNFs \citep{abasi2014exact} or hypergraphs \citep{dyachkov2016adaptive}, under a Boolean "edge-detection" oracle, (does this induced subgraph contain a hyperedge?). However, under this oracle, disentangling the subset lattice incurs a significant information-theoretic penalty: \cite{angluin2008learning} show query complexity must scale as $(s/d)^{\Omega(d)}$ (or scale exponentially in $s$ if we consider $d>s$). This renders learning under this oracle intractable for higher-degree polynomials, even when accounting for incoherence.

\paragraph{Additive model queries.} 
Fourier methods scale like $2^d$ due to basis mismatch, and combinatorial methods with the Boolean oracle scale as $(s/d)^{\Omega(d)}$ due to information-theoretic barriers. To break this exponential deadlock in $d$, we consider the \emph{additive oracle} (how many hyperedges are in the induced subgraph?). Unlike the Boolean oracle, which only indicates the \emph{existence} of a hyperedge, additive queries provide ``volume'' information that enables finer discrimination of overlapping structures. \citet{wendler2021learning} initiated the study of this oracle for non-orthogonal representations beyond $d >2$, but their algorithm does not exploit low-degree structure, resulting in a coarse $O(sn)$ query complexity. We bridge this gap by exploiting a connection to \emph{group testing} \citep{dorfman1943}. \textbf{We leverage group testing designs as projections over the Boolean semiring, which enable an iterative search algorithm that systematically navigates the coefficient space to isolate non-zero terms}. 

\subsection{Main Contributions}

We develop two new algorithms for learning $s$-sparse degree $d$ polynomials. Both approaches leverage group testing designs to achieve query complexity that overcomes the exponential barrier present in prior works. Our main contributions are:

\begin{enumerate}[itemsep=0.5ex, leftmargin=*]
    \item {\textbf{Near-optimal query complexity for $s$-sparse degree $d$ polynomials.} We present  \FASMT{}, which exactly learns any $s$-sparse degree $d$ polynomial using $O(sd\log n)$ adaptive queries to an additive oracle, with time complexity $O((sd + n) \cdot sd \log(n/d))$ (Theorem~\ref{thm:asmt}).}
    \item {\textbf{Few-round algorithm.} We also present \PASMT{}, which performs exact learning in $O(sd^2 \log n)$ queries to the additive oracle. Unlike fully sequential methods, PASMT requires only $O(d^2 \log n)$ rounds of adaptivity, completely independent of $s$ (Theorem~\ref{thm:fasmt}).}%, but trades an extra factor of $d$ in query complexity to achieve this.
    \item {\textbf{Lower bounds.} We establish an information-theoretic lower bound of $\Omega\left(sd\log{(n/d)}/\log{s}\right)$ on the query complexity for learning $s$-sparse polynomials of degree $d$ (Theorem~\ref{thm:lower-bound}). \FASMT{} achieves an almost-matching upper bound, establishing its near-optimality across all scaling regimes of $s$ and $d$.}
    \item {\textbf{Application to hypergraph reconstruction.} When applied to hypergraph reconstruction,
    \FASMT{} recovers any hypergraph with $n$ vertices and $s$ hyperedges of size at most $d$ using $O(sd \log n)$ additive queries. To highlight its practicality, we demonstrate the efficacy of \FASMT{} on empirical hypergraphs derived from digital logic circuits and  biological metabolic networks, which exhibit complex structural correlations.
    }
\end{enumerate}

We leverage the compatibility between the geometry of the Möbius transform and the Boolean semiring underlying group testing, as outlined in \citet{kang2024learning}. 
\FASMT{} uses generalized binary splitting \citep{hwang1972} to implement a \emph{depth-first search} that identifies coefficients one at a time. \PASMT{} instead uses a $d$-disjunct matrix \citep{kautz1964nonrandom} 
to implement a \emph{breadth-first} search that identifies important regions adaptively, learning all the coefficients at the end. 

\subsection{Related Works}

\paragraph{Sparse polynomials over $\{0,1\}^n$ in the additive model.} Learning in the \AND basis is equivalent to hypergraph reconstruction via induced subgraph queries. For low-degree polynomials ($d=1,2$), \cite{bshouty2012toward} and \cite{mazzawi2010optimally} provide near-optimal algorithms. For arbitrary degrees, \cite{wendler2021learning} achieve exact learning but suffer linear dependence on the dimension $n$, failing to exploit sparsity in $d$. \cite{kang2024learning} provide a non-adaptive algorithm that uses $O(sd \log n)$ queries but requires restrictive probabilistic independence assumptions on the coefficient locations and only provides asymptotic guarantees. Our work unifies these approaches, achieving exact learning with $O(sd\log n)$ adaptive queries for arbitrary degree-$d$ polynomials, without distributional assumptions.

\paragraph{Alternative bases and oracles.} 
Standard spectral methods operate over the parity basis.
While powerful for parity-based functions \citep{amrollahi2019efficiently}, these methods incur a $2^d$ blowup in complexity when representing local \AND conjunctions, making them unsuitable for sparse combinatorial objects. 
Our work draws inspiration from \emph{sparse Fourier transform} literature \citep{Pawar2013, Hassanieh2012}.
Literature on learning monotone DNFs focuses on the weaker Boolean "edge-detection" oracle.
In this model, information-theoretic lower bounds scale exponentially in $s$ or $d$, depending on which is larger \citep{angluin2008learning, abasi2014exact, dyachkov2016adaptive, du2006pooling}.
These limitations highlight the necessity of the additive oracle: it allows us to avoid the $2^d$ spectral explosion while circumventing the exponential barrier of Boolean search. Recently, \cite{balkanski2022learning} demonstrated that this exponential barrier can also be broken by restricting the hypothesis class to functions where non-zero coefficients possess disjoint support, whereas our approach generalizes to arbitrary sparse polynomials. For a more detailed literature survey, see Appendix~\ref{app:related_work}.
\section{Preliminaries}\label{sec:background}

Throughout this paper $[n] = \{1, 2, \ldots, n\}$. Boldface symbols denote binary vectors $\ba \in \{0,1\}^n$ and binary matrices $\bH \in \{0,1\}^{n \times m}$. The columns of $\bH$ are denoted by $\bh_1, \bh_2, \ldots, \bh_m \in \{0,1\}^n$.
The Hamming weight of $\ba$ is denoted $|\ba|$.
The bitwise negation of $\ba$ is denoted $\neg \ba$.
For binary vectors $\ba$ and $\bb$, we write $\ba \| \bb$ to denote their concatenation. 
We write $\ba \leq \bb$ if $a_i \leq b_i$ for all $i \in [n]$. We also define the \emph{lexicographic ordering} $\ba \prec \bb$ if either (i) $\ba$ is a proper prefix of $\bb$, or (ii) at the first position $j$ where $a_j \neq b_j$, we have $a_j < b_j$.
We use $\beps$ to denote the unique binary vector of length zero, i.e., the empty binary string.
All operations between Boolean elements are over the \emph{Boolean semiring} $(\{0,1\}, \vee, \wedge)$, where addition is the logical \OR and multiplication is the logical \AND.

\ifARXIV\else\newpage\fi

\subsection{Sparse Polynomials over the Boolean Domain}

Our goal is to decompose real-valued Boolean functions $f: \{0,1\}^n \to \bbR$ with respect to an \AND basis, where each basis function represents a \emph{monomial} in the function's polynomial representation. Each basis function $\phi_S(\mathbf{x}): \{0,1\}^n \to \{0, 1\}$ corresponds to a set $S \subseteq [n]$ of input variables:
\begin{equation}
    \phi_S(\mathbf{x}) = \prod_{i \in S} x_i,
\end{equation}
with $\phi_{\emptyset} = 1$ by convention. Note that $\phi_S(\mathbf{x})$ computes the \AND of the variables in set $S$. It is a straightforward linear algebra exercise to prove that $\{ \phi_S : S \subseteq [n]\}$ forms a  basis for functions $f: \{0,1\}^n \to \bbR$. In other words, \emph{any} Boolean function can be written in the form:
\begin{equation}\label{eq:inverse_tf}
    f(\bx) = \sum_{S \subseteq [n]} F(S) \phi_S(\mathbf{x}).
\end{equation}
The coefficients $F(S) \in \mathbb{R}$ are known as the \emph{Möbius coefficients} of $f$. It should be noted that the sum in \eqref{eq:inverse_tf} denotes \emph{real addition}, making it distinct from a MDNF. 

It will be useful to index the coefficients by a binary vector $\bk \in \{0,1\}^n$ where $\supp(\bk) = S$.
Since $\phi_S(\bx)=1$ if and only if $\bk \leq \bx$, only monomials satisfying $\bk \leq \bx$ contribute to the sum, giving the equivalent form:
\begin{equation}\label{eq:mobius-expansion}
    f(\bx) = \sum_{\bk \leq \bx} F(\bk).
\end{equation}
This shift from set notation to vector notation is crucial. It reveals that the partial order constraint $\bk \leq \bx$ in the Möbius expansion is equivalent to the group testing condition $(\neg \bx)^\top \bk = 0$. This correspondence allows us to leverage group testing techniques to design efficient query strategies for recovering sparse Möbius coefficients.

In this representation, learning polynomials over the Boolean domain amounts to recovering the non-zero Möbius coefficients $F(\bk)$ and their locations $\bk$ from queries to $f$. We characterize the complexity of $f$ by its sparsity and degree.

\begin{definition}[Sparsity and Degree]\label{def:sparsity-degree}
A function $f$ is \textbf{$s$-sparse} if at most $s$ of its Möbius coefficients are non-zero. A function $f$ has \textbf{degree $d$} if $|\bk| \leq d$ for all $\bk$ with $F(\bk) \neq 0$.
\end{definition}

\subsection{The Additive Model}

We consider a model where we query $\bx$, and observe $f(\mathbf{x})$ from the oracle. 
A challenge in the additive model is the phenomenon of coefficient cancellation, where non-zero terms sum to zero. This creates an ambiguity indistinguishable from zero coefficients. To resolve this, we operate under the subset-sum independence assumption.

\begin{assumption}[Subset-sum Independent Coefficients]\label{def:subset-sum-independent}
A function $f$ has \textbf{subset-sum independent coefficients} if no non-empty subset of its non-zero Möbius coefficients sums to zero. That is, for any non-empty subset $K \subseteq \{\bk : F(\bk) \neq 0\}$, we have
\begin{equation}\label{eq:no-cancel}
\sum_{\bk \in K} F(\bk) \neq 0.    
\end{equation}
\end{assumption}
Under Assumption~\ref{def:subset-sum-independent}, we can recover the Boolean oracle by $\one \left\{ f(\mathbf{x}) \neq 0\right\}$.
This assumption is weak enough that it captures all previous assumptions in the additive model literature: it is implied by the \emph{monotonicity} assumption \citep{bshouty2012toward} and is satisfied almost surely for \emph{random coefficients} drawn from a continuous distribution \citep{kang2024learning}. 
 We note that our results actually only require \eqref{eq:no-cancel} to hold for \emph{affine slices} of the Boolean hypercube, not generic sets (see Appendix Assumption~\ref{def:affine-slice-non-cancel}).

\subsection{Subsampling and Bin Sums}

A query $f(\bx)$ provides the sum over all coefficients with support contained in $\bx$. By carefully choosing queries $\bx$, we can control the coefficients that contribute to the sum.

\emph{Example: One-step bin splitting.} When we query $f(\boldsymbol{1})$, we get the sum of all coefficients: $\sum_{\bk} F(\bk)$. Consider another query $f(\neg \bh)=\sum_{\bk:\bh^{\top} \bk = 0}F(\bk)$ (since $\bk \leq \neg \bh \iff \bh^{\top} \bk = 0$). By examining the difference, we can partition the coefficients with respect to a \emph{test vector} $\bh$:
\begin{equation}
    f( \boldsymbol{1} ) - f(\neg \bh) = \sum_{\bk:\bh^{\top} \bk = 1} {F(\bk)},\quad \quad f( \neg \bh ) = \sum_{\bk:\bh^{\top} \bk = 0} {F(\bk)}.
\end{equation}
This is straightforward for the initial "splitting" step but becomes non-trivial as we iteratively split the space of coefficients. Below, we define some primitives central to our analysis.

\begin{definition}[Bins and Bin Sums]\label{def:bins}
Bins represent sets of coefficient indices $\bk$ and are defined recursively. The "root bin" is denoted $\cB(\beps) = \{0,1\}^n$, containing all possible $\bk$. Given a bin $\cB(\bell)$ for $\bell \in \{0,1\}^t$ and a test vector $\bh \in \{0,1\}^n$ (which may depend on $\bell$), we define two child bins:
\begin{equation}
    \cB(\bell \| 0) = \{ \bk \in \cB(\bell) : \bh^\top \bk = 0 \}, \quad
    \cB(\bell \| 1) = \{ \bk \in \cB(\bell) : \bh^\top \bk = 1 \},
\end{equation}
where the inner product is over the Boolean semiring. The \textbf{bin sum} associated with bin $\cB(\bell)$ is $V(\bell) = \sum_{\bk \in \cB(\bell)} F(\bk)$ and it satisfies the recursive relation $V(\bell) = V(\bell \| 0) + V(\bell \| 1)$.
\end{definition}

We define the \emph{depth} of a bin $\cB(\bell)$ as $|\bell|$. At any depth $t$, the collection $\{\cB(\bell) : \bell \in \{0,1\}^t\}$ forms a partition of $\{0,1\}^n$. We say a bin $\cB(\bell)$ is \emph{active} whenever its bin sum is non-zero $V(\bell) \neq 0$. Assumption~\ref{def:subset-sum-independent} ensures that if a bin has zero sum, then all of its descendant bins also have zero sum, allowing us to safely prune inactive bins.

Our approach iteratively refines bins to isolate each non-zero coefficient in a way that allows recovering both its value $F(\bk)$ and location $\bk$. Starting from the root bin containing all of $\{0,1\}^n$, we repeatedly split bins using test vectors and prune those bins with zero sum. When a bin contains a single coefficient, its sum gives the value, and the sequence of test outcomes encoded in $\bell$ reveals the location. The key primitive is computing the bin sum $V(\bell) = \sum_{\bH^\top \bk = \bell} F(\bk)$, where $\bH$ is the matrix of test vectors along the path to $\cB(\bell)$. 
While bin sums cannot be obtained directly from queries to $f$, the following lemma shows how to design queries that yield a related quantity.

\begin{lemma}[Subsampling]\label{lem:query-constraint}
Consider $\bH \in \{0,1\}^{n \times t}$ and $\bell \in \{0,1\}^t$. Define the query vector $\bx = \neg(\bH \cdot \neg\bell)$. Then
\begin{equation}
    f(\bx) = \sum_{\bH^\top \bk \; \leq \; \bell} F(\bk),
\end{equation}
where the inequality is component-wise over the Boolean semiring.
\end{lemma}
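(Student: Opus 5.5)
By \eqref{eq:mobius-expansion}, $f(\bx) = \sum_{\bk \le \bx} F(\bk)$. So it suffices to show that, for every $\bk \in \{0,1\}^n$,
\begin{equation*}
\bk \le \neg(\bH \cdot \neg\bell) \iff \bH^\top \bk \le \bell ,
\end{equation*}
with all products taken over the Boolean semiring. Once this equivalence holds, the two index sets of the sums coincide and the lemma follows immediately. I would prove it by unfolding both sides coordinatewise and applying De Morgan's laws.

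\textbf{Left side.} The condition $\bk \le \neg \bz$ with $\bz = \bH \cdot \neg\bell$ says that $k_i = 1$ implies $z_i = 0$, for every $i \in [n]$. Over the Boolean semiring,
\begin{equation*}
z_i = \bigvee_{j \in [t]} \bigl(H_{ij} \wedge \neg \ell_j\bigr),
\end{equation*}
so $z_i = 0$ means that for every $j$, $H_{ij} = 1$ implies $\ell_j = 1$. Hence the left side is equivalent to: for all $i, j$, if $k_i = 1$ and $H_{ij} = 1$, then $\ell_j = 1$.

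\textbf{Right side.} The $j$-th coordinate of $\bH^\top \bk$ is $\bigvee_i \bigl(H_{ij} \wedge k_i\bigr) = \bh_j^\top \bk$. The condition $\bH^\top \bk \le \bell$ says that whenever this coordinate equals $1$, we have $\ell_j = 1$. That is, for all $j$, if some $i$ has $H_{ij} = k_i = 1$, then $\ell_j = 1$. This is the same universally quantified statement over pairs $(i,j)$ obtained for the left side, so the equivalence holds.

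There is no real obstacle here. The only care needed is to keep the semiring operations (OR/AND) distinct from the real summation in $f$, and to handle the degenerate case $t = 0$, where $\bx = \bOne$ and the condition on $\bk$ is vacuous. As a sanity check, the case $t = 1$ with $\bell = 0$ gives $\bx = \neg\bh$, which recovers the one-step splitting example.
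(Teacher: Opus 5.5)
Your proposal is correct and follows essentially the paper's route. The paper reduces to the same equivalence $\bk \le \neg(\bH\cdot\neg\bell) \iff \bH^\top\bk \le \bell$. It then proves it with the identity $\ba \le \bb \iff \ba^\top(\neg\bb) = 0$ and the regrouping $\bk^\top(\bH\cdot\neg\bell) = (\bH^\top\bk)^\top(\neg\bell)$, which is the condition ``$k_i \wedge H_{ij} \wedge \neg\ell_j = 0$ for all $i,j$'' that you obtain by unfolding coordinatewise.
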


\begin{proof}
Since $f(\bx) = \sum_{\bk \leq \bx} F(\bk)$, it suffices to show $\bk \leq \bx \iff \bH^\top \bk \leq \bell$. Recall that $\ba \leq \bb$ is equivalent to $\ba^\top (\neg \bb) = 0$. Applying this characterization to both sides, we obtain
\[
    \bk \leq \neg(\bH \cdot \neg\bell)
    \;\iff\; \bk^\top (\bH \cdot \neg\bell) = 0
    \;\iff\; (\bH^\top \bk)^\top (\neg\bell) = \bZero
    \;\iff\; \bH^\top \bk \leq \bell.
\]
\end{proof}

By Lemma~\ref{lem:query-constraint}, querying $f(\neg(\bH \cdot \neg\bell))$ sums over all $\bk$ with $\bH^\top \bk \leq \bell$, which includes not only coefficients in $\cB(\bell)$ but also coefficients from other bins.
The central challenge is to disentangle these overlapping contributions to isolate each bin's sum. Our two algorithms address this differently: 
\PASMT{} uses a fixed matrix $\bH$ for all bins, which induces a structured linear system that can be solved to recover all bin sums at each depth simultaneously. 
\FASMT{} instead selects test vectors adaptively for each bin, which requires fewer splits per coefficient but breaks this linear structure. By processing bins sequentially, \FASMT{} can subtract the contributions of previously discovered coefficients to isolate each bin's sum.
Before presenting the full algorithms, we illustrate how the structure relating queries to bin sums can be exploited to recover the bin sums.

\emph{Example: Two-step bin splitting.} Continuing the one-step example, suppose we split using two fixed test vectors $\bh_1$ and $\bh_2$, forming four bins $\cB(00), \cB(01), \cB(10), \cB(11)$. Let $\bH = [\bh_1 \mid \bh_2]$. By Lemma~\ref{lem:query-constraint}, the four queries $f(\neg(\bH \cdot \neg\bell))$ for $\bell \in \{00, 01, 10, 11\}$ yield:
\begin{align*}
    f(\neg(\bh_1 \vee \bh_2)) &= V(00), \\
    f(\neg \bh_1) &= V(00) + V(01), \\
    f(\neg \bh_2) &= V(00) + V(10), \\
    f(\mathbf{1}) &= V(00) + V(01) + V(10) + V(11).
\end{align*}
These measurements form a linear system that can be solved to recover all four bin sums.
\section{Algorithms}

We now present our main algorithms. \PASMT{} achieves query complexity $O(sd^2 \log n)$ using only $O(d^2\log n)$ rounds of adaptivity, and \FASMT{} improves this to $O(sd \log(n/d))$ using adaptive group testing. Both algorithms exploit the structure of the M\"obius transform and improve upon existing methods that scale exponentially in $d$.

\subsection{Partially-Adaptive Sparse M\"obius Transform}

The Partially-Adaptive Sparse M\"obius Transform (\PASMT{}) uses test vectors from a fixed $d$-disjunct matrix $\bH \in \{0,1\}^{n \times b}$ with $b = O(d^2 \log n)$ columns.
The algorithm proceeds in $b$ rounds: in round $t$, all active bins are split using the test vector $\bh_t$ (the $t^{\text{th}}$ column of $\bH$), and bins with zero sum are pruned. After all $b$ rounds, the $d$-disjunct property guarantees that each surviving bin contains exactly one coefficient with degree at most $d$.

The algorithm is called \emph{partially-adaptive} because it combines non-adaptive and adaptive elements. The test vectors are fixed in advance, but the decision of which bins to query depends on previous observations---bins identified as empty are pruned and not queried further. 
This partial adaptivity allows all bins at the same depth to be jointly processed in a single round, corresponding to a \emph{breadth-first} traversal of the binary tree (Figure~\ref{fig:algorithms}).

\begin{algorithm2e}[t]
    \DontPrintSemicolon
    \KwIn{Query access to $f: \{0,1\}^n \to \bbR$, group testing matrix $\bH \in \{0,1\}^{n \times b}$}
    \KwOut{Coefficient map $T: \{0,1\}^n \to \bbR$}
    $\cL \gets [\beps]$, $\bv \gets [f(\bOne)]$ \tcp*{Initialize root bin}
    \For{$t = 0, \ldots, b-1$}{
        $\bx_{\bell} \gets \neg(\bH_{t+1} \cdot \neg(\bell \| 0))$ for all $\bell \in \cL$ \tcp*{Construct query vectors}
        $\boldm \gets [f(\bx_{\bell})]_{\bell \in \cL}$ \tcp*{Query the function}
        $\bM \gets [\bOne\{\bell' \leq \bell\}]_{(\bell, \bell') \in \cL^2}$ \tcp*{Compute measurement matrix}
        $\bv^{(0)} \gets \bM^{-1} \boldm$; \enspace $\bv^{(1)} \gets \bv - \bv^{(0)}$ \tcp*{Solve for child bin sums}
        $(\cL, \bv) \gets ([\bell \| 0]_{\bell \in \cL} \,\|\, [\bell \| 1]_{\bell \in \cL}, \; \bv^{(0)} \,\|\, \bv^{(1)})$ \tcp*{Expand children}
        Remove $(\bell, v_{\bell})$ from $(\cL, \bv)$ if $v_{\bell} = 0$ \tcp*{Prune empty bins}
    }
    $T[\textsc{Decode}(\bH, \bell)] \gets v$ for all $(\bell, v) \in (\cL, \bv)$ \tcp*{Decode coefficient locations}
    \Return{$T$}
    \caption{Partially-Adaptive Sparse M\"obius Transform (\PASMT{})}
    \label{alg:asmt}
\end{algorithm2e}

\paragraph{Bin structure.} Consider a fixed test matrix $\bH$ and let $\bH_t = [\bh_1 \mid \cdots \mid \bh_t]$ denote its first $t$ columns. Then, the bin associated with label $\bell \in \{0,1\}^t$ is defined as
\begin{equation}
    \cB(\bell) = \{\bk \in \{0,1\}^n : \bH_t^\top \bk = \bell\}.
\end{equation}
Because every bin at depth $t$ is defined by the same matrix $\bH_t$, the bins correspond to the equivalence class of vectors associated with syndrome $\bell$ under $\bH_t^{\top}$. This structure allows us to jointly solve the bin sums using a linear system. Once the left child bin sums $V(\bell \| 0)$ are recovered from this system, the corresponding right child bin sums are obtained via the relation $V(\bell \| 1) = V(\bell) - V(\bell \| 0)$.

\begin{restatable}[Bin Refinement]{lemma}{binrefinementlemma}\label{lem:measurement}
At depth $t$, let $\cL \subseteq \{0,1\}^t$ denote the set of labels for active bins. For each $\bell \in \cL$, define the query vector $\bx_{\bell} \defeq \neg\bigl(\bH_{t+1} \cdot \neg(\bell \| 0)\bigr)$. Then the measurements $\{f(\bx_{\bell})\}_{\bell \in \cL}$ and left child bin sums $\{V(\bell \| 0)\}_{\bell \in \cL}$ satisfy
\begin{equation}
    f(\bx_{\bell}) = \sum_{\bell' \in \cL \; : \; \bell' \leq \bell} V(\bell' \| 0) \quad \text{for all } \bell \in \cL.
\end{equation}
Furthermore, when $\cL$ is ordered lexicographically, this linear system has a lower-triangular structure that enables recovery of bin sums via back-substitution in $O(|\cL|^2)$ time.
\end{restatable}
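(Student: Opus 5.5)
The plan is to apply Lemma~\ref{lem:query-constraint} with the matrix $\bH_{t+1}$ and the label $\bell\|0$. This gives
\[
f(\bx_{\bell}) = \sum_{\bk : \bH_{t+1}^\top \bk \leq \bell\|0} F(\bk).
\]
Every $\bk\in\{0,1\}^n$ lies in exactly one depth-$(t+1)$ bin, namely $\cB(\bH_{t+1}^\top\bk)$. So we can group the sum by bin label $\bell''\in\{0,1\}^{t+1}$:
\[
f(\bx_{\bell}) = \sum_{\bell'' \leq \bell\|0} V(\bell'').
\]
The condition $\bell''\leq \bell\|0$ forces the last coordinate of $\bell''$ to be $0$. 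Writing $\bell''=\bell'\|0$, it is equivalent to $\bell'\leq\bell$ with $\bell'\in\{0,1\}^t$. Hence
\[
f(\bx_{\bell})=\sum_{\bell'\leq\bell} V(\bell'\|0),
\]
where the sum ranges over all of $\{0,1\}^t$.

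The main step is restricting this sum to $\bell'\in\cL$. If $\bell'\notin\cL$, then $V(\bell')=0$. I would argue that every depth-$t$ bin with zero sum is either inactive or a descendant of a pruned bin. By Assumption~\ref{def:subset-sum-independent}, a zero bin sum means the bin contains no nonzero coefficient. Indeed, if its nonzero coefficients formed a nonempty set, their sum would be nonzero. Therefore every descendant bin, including $\cB(\bell'\|0)$, also has no nonzero coefficients, so $V(\bell'\|0)=0$. This gives the claimed identity. The one subtlety is to state carefully that $\cL$ is exactly the set of labels with $V(\bell')\neq0$. Under pruning, any label outside $\cL$ is either a pruned label or extends one, so it has zero sum either way.

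For the structure claim, set $\bM_{\bell,\bell'}=\one\{\bell'\leq\bell\}$ on $\cL\times\cL$. If $\bell'\leq\bell$ componentwise and $\bell'\neq\bell$, then at the first differing position $\bell'$ has a $0$ and $\bell$ has a $1$. Hence $\bell'\prec\bell$. So under the lexicographic order all nonzero entries of $\bM$ lie on or below the diagonal, and the diagonal entries are $1$. The matrix is therefore unit lower triangular and invertible.

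The unknowns can then be recovered in increasing lexicographic order by forward substitution:
\[
V(\bell\|0)=f(\bx_{\bell})-\sum_{\bell'\in\cL,\ \bell'\leq\bell,\ \bell'\neq\bell} V(\bell'\|0).
\]
This is what the lemma calls back-substitution. Each step costs $O(|\cL|)$ operations, for $O(|\cL|^2)$ in total. Checking $\bell'\leq\bell$ costs $O(t)$ per pair, but we treat it as a constant-time word operation, or note that it is absorbed as in the statement.

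I expect the only real obstacle to be the restriction-to-$\cL$ step, that is, justifying that pruned and inactive bins contribute nothing. Assumption~\ref{def:subset-sum-independent} handles it directly.
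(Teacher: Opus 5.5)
Your proposal is correct and follows essentially the same route as the paper. You apply Lemma~\ref{lem:query-constraint} with $\bH_{t+1}$ and $\bell\|0$, note that the final coordinate forces $\bh_{t+1}^\top\bk=0$ so only left children contribute, observe that each $\cB(\bell'\|0)$ contributes entirely or not at all according to whether $\bell'\le\bell$, and use that the lexicographic order refines $\le$ to get a unit lower-triangular system. Your explicit use of Assumption~\ref{def:subset-sum-independent} to discard labels outside $\cL$ spells out what the paper compresses into its remark that the non-zero coefficients are partitioned among $\{\cB(\bell'\|0)\}_{\bell'\in\cL}$.
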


\begin{proof}
\emph{(Sketch.)} By Lemma~\ref{lem:query-constraint}, the query $f(\bx_{\bell})$ sums over coefficients $\bk$ satisfying $\bH_{t+1}^\top \bk \leq \bell \| 0$. The key observations are: (1) the constraint $\bh_{t+1}^\top \bk = 0$ restricts contributions to left child bins only, and (2) for any $\bk$ in left child bin $\cB(\bell' \| 0)$, the full constraint holds iff $\bell' \leq \bell$. Thus, each bin either contributes entirely or not at all, yielding the stated sum. The lower-triangular structure follows since lexicographic order $\prec$ refines the partial order $\leq$. See Appendix~\ref{sec:pasmt-proofs} for details.
\end{proof}

\paragraph{Recovering coefficient locations.} After $b$ rounds, each active bin contains a single coefficient with known value. It remains to recover the coefficient's location from its bin label $\bell \in \{0,1\}^b$. The matrix $\bH$ is chosen to be a \emph{$d$-disjunct matrix} with $b = O(d^2 \log n)$ columns. This guarantees that any $\bk$ with $|\bk| \leq d$ can be uniquely recovered from $\bell = \bH^\top \bk$ via an efficient decoding procedure $\textsc{Decode}(\bH, \bell)$~\citep{porat2008explicit}.

\begin{restatable}[\PASMT{} Correctness and Complexity]{theorem}{pasmttheorem}\label{thm:asmt}
Let $f: \{0,1\}^n \to \bbR$ be an $s$-sparse polynomial of degree at most $d$ (where $d$ is known). Suppose $\bH \in \{0,1\}^{n \times b}$ is a $d$-disjunct matrix with $b = O(d^2 \log n)$ tests. Then Algorithm~\ref{alg:asmt} recovers all non-zero coefficients using $O(s d^2 \log n)$ adaptive queries and $O((s^2 + sn) d^2 \log n)$ time.
\end{restatable}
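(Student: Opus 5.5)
We prove correctness by induction on the depth $t$ and then count queries and work per round.

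\textbf{Invariant.} After processing round $t$ (columns $\bh_1,\dots,\bh_t$), the list $(\cL,\bv)$ consists exactly of the labels $\bell\in\{0,1\}^t$ with $V(\bell)\neq 0$, paired with their true bin sums, where $\cB(\bell)=\{\bk:\bH_t^\top\bk=\bell\}$. The base case is the root: $\cL=[\beps]$ and $v=f(\bOne)=V(\beps)$.

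\textbf{Inductive step.} Lemma~\ref{lem:measurement} applied to the active labels $\cL$ gives $\boldm=\bM\bv^{(0)}$, where $\bv^{(0)}=(V(\bell\|0))_{\bell\in\cL}$. Under the lexicographic order, $\bM$ is unit lower-triangular, so $\bM^{-1}\boldm$ returns the left-child sums exactly. The identity $V(\bell\|1)=V(\bell)-V(\bell\|0)$ then gives the right-child sums.

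One subtlety needs checking: Lemma~\ref{lem:measurement} sums only over $\bell'\in\cL$, while the raw query from Lemma~\ref{lem:query-constraint} also collects contributions from inactive bins $\bell'\notin\cL$ with $\bell'\le\bell$. Those bins have $V(\bell')=0$, and by Assumption~\ref{def:subset-sum-independent} (no nonempty subset of nonzero coefficients cancels) every subset of an inactive bin also sums to zero. In particular $V(\bell'\|0)=0$, so omitting these bins is harmless. By the same assumption, a child is active if and only if it contains a nonzero coefficient, so pruning on $v_{\bell}=0$ keeps exactly the bins that contain support. This proves the invariant.

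\textbf{Termination and decoding.} After $b$ rounds, every surviving bin contains at least one nonzero coefficient $\bk$ with $|\bk|\le d$. Two distinct such $\bk,\bk'$ cannot share the label $\bH^\top\bk$, because $d$-disjunctness makes $\bk\mapsto\bH^\top\bk$ injective on vectors of weight at most $d$. Hence each surviving bin holds exactly one coefficient, its bin sum equals $F(\bk)$, and $\textsc{Decode}(\bH,\bell)$ returns $\bk$. Every nonzero coefficient lies in some active bin, so all of them are recovered.

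\textbf{Complexity.} Because bins at a fixed depth partition $\{0,1\}^n$ and each active bin contains a nonzero coefficient, $|\cL|\le s$ in every round. Queries therefore total at most $1+bs=O(sd^2\log n)$.

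For time, each round does the following:
\begin{itemize}
\item it builds $|\cL|$ query vectors, each an OR of up to $t+1$ columns masked by the label; done naively this costs $O(nb)$ per vector, which is too much;
\item it back-substitutes in $O(|\cL|^2)=O(s^2)$, using the lexicographic triangularity from Lemma~\ref{lem:measurement}.
\end{itemize}

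The main obstacle is constructing the query vectors efficiently. The plan is to maintain, for each active $\bell$, the running vector $\bH_t\cdot\neg\bell$, which represents the set of coordinates forced to zero. The child $\bell\|0$ updates it by OR-ing in $\bh_{t+1}$ in $O(n)$ time, and $\bell\|1$ inherits the parent's vector unchanged. Maintaining these vectors is $O(sn)$ per round. Over $b$ rounds this gives $O((s^2+sn)b)=O((s^2+sn)d^2\log n)$. The final decoding of at most $s$ labels must fit within this budget; we argue that the cited efficient decoder of \citet{porat2008explicit} runs in $O(nb)$ per label, so decoding adds $O(snb)$, matching the bound.
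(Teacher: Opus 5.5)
Your proof is correct and follows the paper's argument essentially step for step. It uses induction on depth via Lemma~\ref{lem:measurement}, at most $s$ active bins per round for the $O(sb)$ query count, $O(n)$ incremental construction of each query vector from the previous round's data, and an $O(bn)$ decoder per coefficient (the paper uses the naive cover decoder, citing \citet{indyk2010efficiently}). Your explicit handling of inactive bins and of the injectivity of $\bk \mapsto \bH^\top \bk$ under $d$-disjunctness spells out what the paper leaves implicit.

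The one item your time accounting omits is the cost of forming $\bM$ itself, which the paper charges $O(|\cL|^2)$ per round. This bound holds because $\bell'\|c' \le \bell\|c$ if and only if $\bell' \le \bell$ and $c' \le c$, so each entry can be read off the previous round's matrix in $O(1)$. Comparing labels from scratch would cost $O(t)$ per entry and $O(s^2 b^2)$ overall, which exceeds the claimed bound.
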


\begin{proof}
\emph{(Sketch.)} Correctness follows by induction: bin sums are correctly maintained via Lemma~\ref{lem:measurement}, and the $d$-disjunct property ensures each coefficient is isolated after $b$ rounds. Query complexity is $O(sb)$ since at most $s$ bins are active per round. Time complexity is $O(dn \log n)$ for matrix construction, $O(s^2 + sn)$ per round for bin refinement, and $O(bn)$ per coefficient for decoding. See Appendix~\ref{sec:pasmt-proofs} for details.
\end{proof}

\subsection{Fully-Adaptive Sparse M\"obius Transform}

The Fully-Adaptive Sparse M\"obius Transform (\FASMT{}) improves upon \PASMT{} by splitting bins using \emph{adaptive} test vectors, rather than using columns of a fixed group testing matrix. To select test vectors based on the current bin's state, \FASMT{} uses generalized binary splitting (\GBSA{})~\citep{hwang1972}, which identifies an unknown vector $\bk \in \{0,1\}^n$ with at most $d$ ones using $O(d \log(n/d))$ tests of the form $\bh^\top \bk \in \{0,1\}$ (see Appendix~\ref{sec:gbsa}). This reduces the number of splits to isolate each coefficient from $O(d^2 \log n)$ in \PASMT{} to $O(d \log(n/d))$. However, since each bin's test vector depends on its path through the search tree, this approach does not admit a measurement structure analogous to Lemma~\ref{lem:measurement}, requiring full adaptivity.

We formalize the adaptive search process through a function $\GBSA(\bell)$. The binary string $\bell \in \{0,1\}^t$ denotes the sequence of $t$ test outcomes $\ell_i = \bh_i^\top \bk$, $i=1, \dotsc, t$. Since \GBSA{} is deterministic, $\bell$ fully specifies the state of the search, and the next action is uniquely determined by the prior outcomes. $\GBSA(\bell)$ returns $(\textsc{Result}, \bk)$ when the unknown vector has been identified as $\bk$, or $(\textsc{Test}, \bh)$ when an additional test vector $\bh$ is needed. $\GBSA$ encodes the entire adaptive search strategy as a binary decision tree, where each node $\bell$ prescribes the next test or declares the search complete.

\paragraph{Bin structure.} 
The root bin $\cB(\beps)$ contains all coefficients. For a bin $\cB(\bell)$ where $\GBSA{}(\bell)$ returns $(\textsc{Test}, \bh)$, the child bins are
\begin{equation}
    \cB(\bell \| 0) \defeq \{\bk \in \cB(\bell) : \bh^\top \bk = 0\} \quad \text{and} \quad \cB(\bell \| 1) \defeq \{\bk \in \cB(\bell) : \bh^\top \bk = 1\}.
\end{equation}
The test vector $\bh$ depends on the specific path, $\bell$, taken through the tree.
If $\GBSA{}(\bell)$ instead returns $(\textsc{Result}, \bk)$, then $\bk$ is the unique vector of degree at most $d$ in $\cB(\bell)$, and the coefficient has been isolated. This holds because $\cB(\bell)$ contains exactly those vectors consistent with the test outcomes $\bell$, and \GBSA{} terminates only when a single candidate remains. 

\paragraph{Lexicographic bin processing.} Since each bin requires its own sequence of adaptively chosen test vectors, \FASMT{} processes bins sequentially in lexicographic order. This \emph{depth-first} traversal allows the algorithm to fully isolate coefficients in each bin before moving to the next (Figure~\ref{fig:algorithms}). This allows us to subtract the contributions of previously recovered coefficients to isolate the current bin's sum.

\begin{algorithm2e}[t]
    \DontPrintSemicolon
    \KwIn{Query access to $f: \{0,1\}^n \to \bbR$}
    \KwOut{Coefficient map $T: \{0,1\}^n \to \bbR$}
    $\cQ \gets \{(\beps, f(\bOne), \beps)\}$ \tcp*{Initialize with root bin}
    \While{$\cQ \neq \emptyset$}{
        $(\bell, v, \bH) \gets \cQ.$\textsc{ExtractMin}$()$ \tcp*{Lexicographic order}
        \lIf(\tcp*[f]{Prune zero-sum bins}){$v = 0$}{\textbf{continue}}
        $(r, \bz) \gets \GBSA{}(\bell)$ \tcp*{Get next action from GBSA}
        \lIf(\tcp*[f]{Peel coefficient}){$r = \textsc{Result}$}{$T(\bz) \gets v$; \textbf{continue}}
        $\bx \gets \neg((\bH \| \bz) \cdot \neg(\bell \| 0))$ \tcp*{Construct query vector}
        $m \gets f_{T}(\bx)$ \tcp*{Query residual function}
        $\cQ.$\textsc{Insert}$((\bell \| 0, m, \bH \| \bz))$ \tcp*{Insert left child}
        $\cQ.$\textsc{Insert}$((\bell \| 1, v - m, \bH \| \bz))$ \tcp*{Insert right child}
    }
    \Return{$T$}
    \caption{Fully-Adaptive Sparse M\"obius Transform (\FASMT{})}
    \label{alg:fasmt}
\end{algorithm2e}

\begin{figure}[t]
    \centering
    \begin{minipage}[c]{0.47\textwidth}
        \centering
        \includegraphics[height=0.43\textheight]{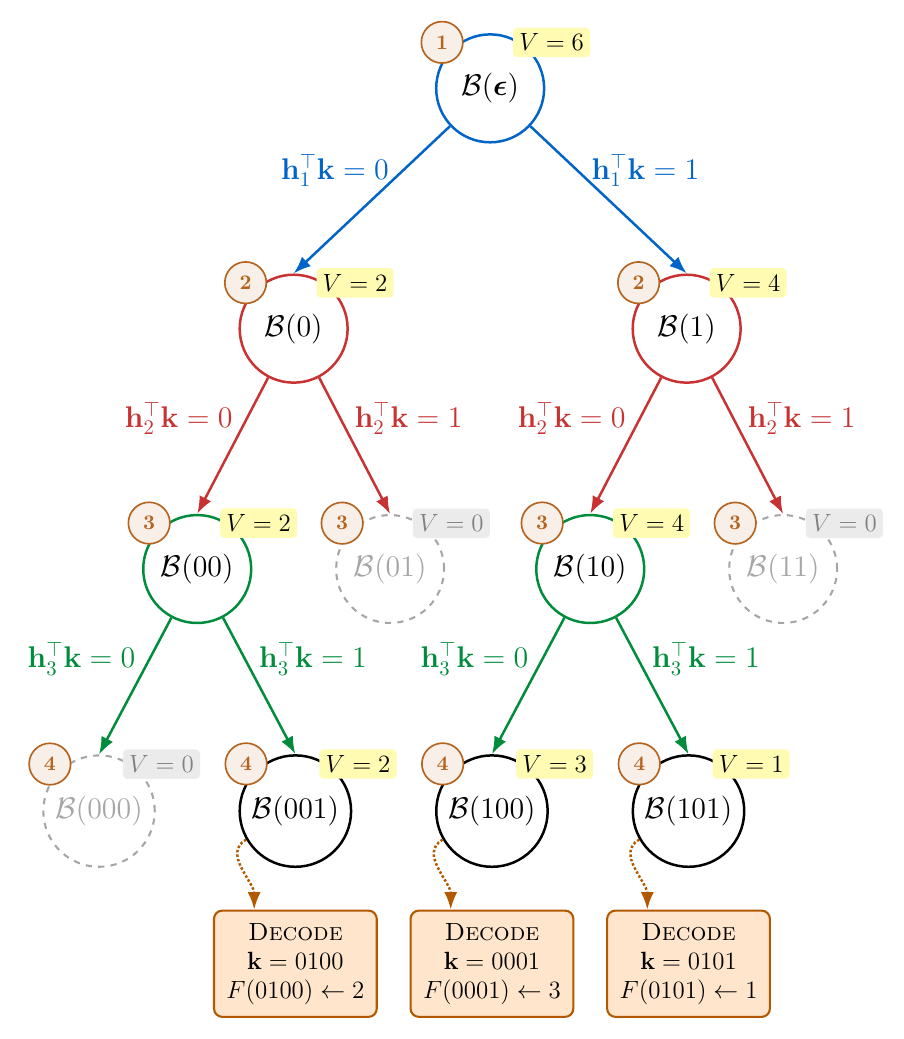}\\[2pt]
        \textbf{(a) \PASMT}
    \end{minipage}
    \hfill
    \hspace{15pt}\vrule width 0.5pt
    \hspace{-15pt}
    \hfill
    \begin{minipage}[c]{0.47\textwidth}
        \centering
        \includegraphics[height=0.43\textheight]{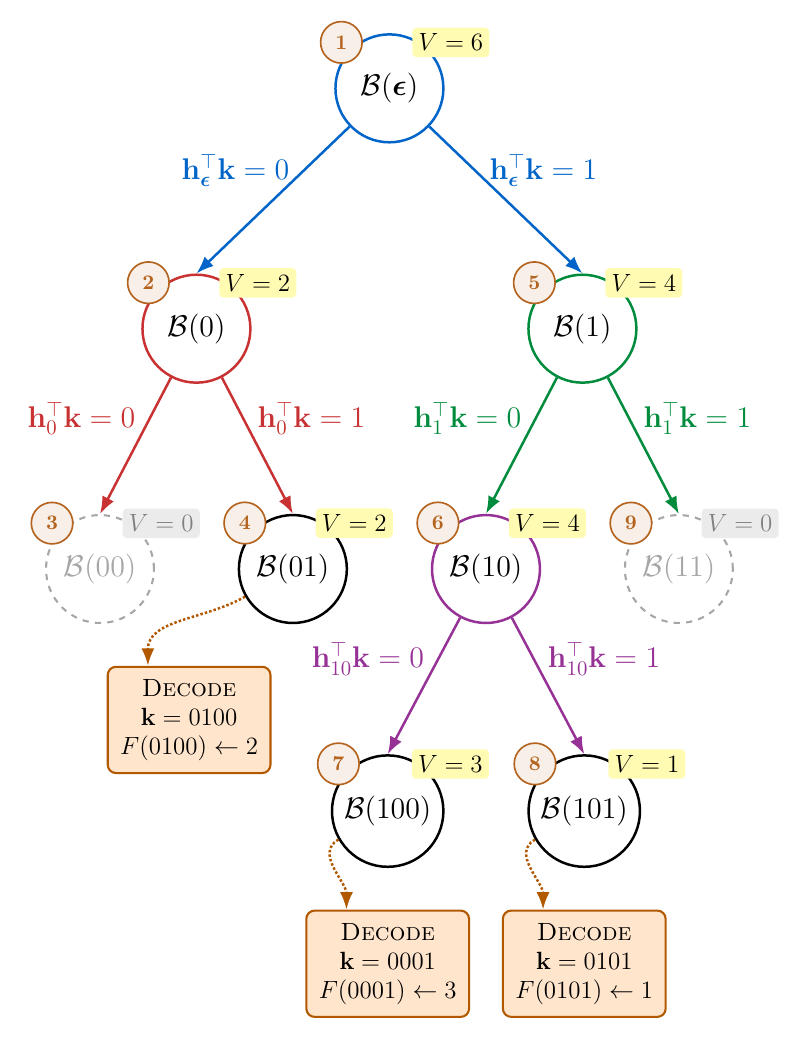}\\[2pt]
        \textbf{(b) \FASMT}
    \end{minipage}
    \caption{Illustration of \PASMT{} (left) and \FASMT{} (right). Each node represents a bin $\cB(\bell)$ with bin sum $V$. Dashed nodes indicate pruned bins with $V=0$. Circled numbers in the top-left corner of each node indicate the processing order. \textbf{(a)} In \PASMT{}, nodes at the same depth are processed jointly using shared test vectors {\color{figblue}$\bh_1 = 0011$}, {\color{figred}$\bh_2 = 1010$}, {\color{figgreen}$\bh_3 = 1100$}. At the leaf nodes, the coefficient location $\bk$ is recovered via $\textsc{Decode}(\bH, \bell)$. \textbf{(b)} In \FASMT{}, test vectors {\color{figblue}$\bh_{\beps} = 1011$}, {\color{figred}$\bh_0 = 0100$}, {\color{figgreen}$\bh_1 = 1010$}, {\color{figpurple}$\bh_{10} = 0100$} are chosen adaptively based on the node location $\bell$. Bins are processed sequentially in lexicographic order, with coefficients recovered via \GBSA{}.}
    \label{fig:algorithms}
\end{figure}

\ifARXIV\else\newpage\fi
\begin{definition}[Residual Function]
Let $T: \{0,1\}^n \to \bbR$ store the coefficients discovered so far, with $T(\bk) = F(\bk)$ for discovered coefficients and $T(\bk) = 0$ otherwise. The \textbf{residual function} subtracts the contributions of discovered coefficients:
\begin{equation}
    f_{T}(\bx) \defeq f(\bx) - \sum_{\bk \leq \bx} T(\bk).
\end{equation}
\end{definition}

The following lemma shows that by processing bins in lexicographic order and maintaining the residual function, a single query suffices to refine a bin into two child bins and compute the bin sum of each child. This holds for any test vector $\bh$, making it suitable for adaptive tests.

\begin{restatable}[Bin Refinement]{lemma}{fasmtbinrefine}\label{lem:bin-refine}
Consider a bin $\cB(\bell)$ at depth $t$ with bin sum $V(\bell)$, and let $\bH \in \{0,1\}^{n \times t}$ denote the matrix whose columns are the test vectors along the path from the root to $\cB(\bell)$. Suppose $\GBSA{}(\bell)$ returns $(\textsc{Test}, \bh)$ and define the query vector
\begin{equation}
    \bx \defeq \neg\bigl((\bH \| \bh) \cdot \neg(\bell \| 0)\bigr).
\end{equation}
If all coefficients in lexicographically smaller bins have been discovered and stored in $T$, then
\begin{equation}
    V(\bell \| 0) = f_{T}(\bx) \quad \text{and} \quad V(\bell \| 1) = V(\bell) - f_{T}(\bx).
\end{equation}
\end{restatable}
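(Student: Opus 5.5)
The plan is to compute $f_T(\bx)$ as a signed sum of coefficients via Lemma~\ref{lem:query-constraint}, and then show that the only coefficients surviving in that sum are exactly those in $\cB(\bell \| 0)$.

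\textbf{Step 1: express the residual as a sum.} The residual $f_T$ is itself a polynomial, with Möbius coefficients $F - T$. This holds because $\sum_{\bk \le \bx} T(\bk)$ is the function whose Möbius coefficients are $T$, and the expansion \eqref{eq:mobius-expansion} is linear. Applying Lemma~\ref{lem:query-constraint} with the matrix $\bH' = (\bH \| \bh)$ and label $\bell \| 0$ gives
\[
f_T(\bx) = \sum_{\bk \,:\, \bH^\top \bk \le \bell,\ \bh^\top \bk = 0} \bigl(F(\bk) - T(\bk)\bigr).
\]
Since $T$ agrees with $F$ on discovered coefficients and vanishes elsewhere, the summand is $F(\bk)$ when $\bk$ is undiscovered, and $0$ otherwise. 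So it remains to show that every undiscovered nonzero coefficient $\bk$ satisfying $\bH^\top \bk \le \bell$ actually satisfies $\bH^\top \bk = \bell$, i.e.\ lies in $\cB(\bell)$.

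\textbf{Step 2 (main step): a lexicographic dominance argument.} Let $\bk$ satisfy $\bH^\top \bk \le \bell$ but $\bk \notin \cB(\bell)$. Trace $\bk$ down the \GBSA{} decision tree; its path is determined by its outcomes at each node. Let $j$ be the first depth at which $\bk$'s path departs from $\bell$.

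Consider first the case that $\bk$'s path reaches depth $j-1$ along the prefix $\ell_1\cdots\ell_{j-1}$. That prefix is a proper prefix of $\bell$, which has been extended, so the node is not a \textsc{Result} leaf. Hence the $j$-th test vector applied to $\bk$ is exactly the $j$-th column $\bh_j$ of $\bH$. Because $\bk$'s outcome differs from $\ell_j$ while $\bh_j^\top \bk \le \ell_j$, we must have $\bh_j^\top \bk = 0$ and $\ell_j = 1$. Thus $\bk$ lies in the subtree rooted at $\ell_1\cdots\ell_{j-1}\|0$.

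Every bin in that subtree is lexicographically smaller than $\bell$. By hypothesis, the coefficients there have all been discovered. For bins that were pruned, I would note that Assumption~\ref{def:subset-sum-independent} forces them to contain no nonzero coefficients. In either case $F(\bk) - T(\bk) = 0$. The delicate point is the careful tree-path bookkeeping: identical prefixes must yield identical test vectors, which uses the determinism of \GBSA{}.

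\textbf{Step 3: conclude.} The remaining terms come from $\bk \in \cB(\bell)$ with $\bh^\top \bk = 0$, which is exactly $\cB(\bell \| 0)$. These coefficients are undiscovered, because bin $\bell$ is currently being processed, so their $T$-values are zero. Hence $f_T(\bx) = V(\bell \| 0)$. The second identity then follows from the recursion $V(\bell) = V(\bell\|0) + V(\bell\|1)$ in Definition~\ref{def:bins}.
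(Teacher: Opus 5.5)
Your proposal is correct and follows essentially the paper's argument: apply Lemma~\ref{lem:query-constraint} to the residual, note that coefficients in $\cB(\bell)$ contribute exactly $V(\bell \| 0)$, and use the first position of disagreement with $\bell$ to rule out every other undiscovered coefficient. The only difference is framing. The paper shows that pending lexicographically larger bins violate the constraint, while you prove the contrapositive: constraint-satisfying coefficients outside $\cB(\bell)$ lie in lexicographically smaller, already-subtracted bins. This relies only on the lemma's stated hypothesis, and it makes explicit the determinism of \GBSA{} that the paper uses implicitly when it identifies the $j$-th test vector.
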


\begin{proof}
\emph{(Sketch.)} By Lemma~\ref{lem:query-constraint}, $f_T(\bx)$ sums over undiscovered coefficients satisfying $(\bH \| \bh)^\top \bk \leq (\bell \| 0)$. Since bins are processed in lexicographic order, all coefficients in bins $\cB(\bell')$ with $\bell' \prec \bell$ have already been discovered and subtracted via $f_T$. Thus, undiscovered coefficients lie either in the current bin $\cB(\bell)$ or in bins $\cB(\bell')$ with $\bell' \succ \bell$. For $\bk \in \cB(\bell)$, the constraint holds iff $\bh^\top \bk = 0$, selecting exactly $\cB(\bell \| 0)$. Coefficients in bins with $\bell' \succ \bell$ violate the constraint since $\bell$ and $\bell'$ differ at some position $j$ where $\ell_j = 0$ but $\ell'_j = 1$. See Appendix~\ref{sec:fasmt-proofs} for details.
\end{proof}

When $\GBSA{}(\bell)$ returns $(\textsc{Result}, \bk)$, the test outcomes encoded in $\bell$ uniquely determines a vector $\bk$ of degree at most $d$. This means $\bk$ is the only coefficient in the bin $\cB(\bell)$ with degree at most $d$, and the following lemma shows that the bin sum equals the coefficient value.

\begin{lemma}[Peeling Correctness]\label{lem:peeling}
Consider a bin $\cB(\bell)$ with bin sum $V(\bell)$. If $\GBSA{}(\bell)$ returns $(\textsc{Result}, \bk)$, then $\bk$ is the unique coefficient in $\cB(\bell)$ with degree at most $d$, and $V(\bell) = F(\bk)$.
\end{lemma}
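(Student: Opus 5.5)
The plan has two parts. First, $\bk$ is the only candidate of degree at most $d$ in $\cB(\bell)$. Second, every other index in the bin carries a zero coefficient, so the bin sum collapses to the single term $F(\bk)$.

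\emph{Step 1: the bin is the set of vectors consistent with the test outcomes.} Unrolling Definition~\ref{def:bins} along the path from the root gives
\[
\cB(\bell)=\{\bk'\in\{0,1\}^n : \bh_i^\top \bk' = \ell_i \text{ for } i=1,\dots,t\},
\]
where $\bh_1,\dots,\bh_t$ are the test vectors that \GBSA{} prescribes at the prefixes $\beps,\ell_1,\ell_1\ell_2,\dots$ of $\bell$. Since \GBSA{} is deterministic, these test vectors depend only on the prefixes of $\bell$. Consequently, any $\bk'$ with $|\bk'|\le d$ that lies in $\cB(\bell)$ would, if it were the hidden vector, drive \GBSA{} along exactly the path $\bell$.

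\emph{Step 2: uniqueness.} I invoke the correctness guarantee of generalized binary splitting (Appendix~\ref{sec:gbsa}). For every hidden $\bk'$ with $|\bk'|\le d$, \GBSA{} terminates with $(\textsc{Result},\bk')$. Suppose $\bk'\in\cB(\bell)$ and $|\bk'|\le d$. By Step~1, running \GBSA{} on $\bk'$ reaches node $\bell$, where it outputs $(\textsc{Result},\bk)$. Correctness then forces $\bk'=\bk$. It remains to check that $\bk$ itself lies in $\cB(\bell)$. This holds because \GBSA{} only declares a result consistent with all observed outcomes, i.e.\ $\bh_i^\top\bk=\ell_i$ for all $i$. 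If one prefers not to rely on this, the case $\bk\notin\cB(\bell)$ simply means the bin contains no degree-$\le d$ vector, and the conclusion below reads $V(\bell)=0$. That case does not arise for active bins once we know $\bk$ is consistent.

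\emph{Step 3: the bin sum.} Because $f$ has degree $d$, $F(\bk')=0$ whenever $|\bk'|>d$. Hence
\[
V(\bell)=\sum_{\bk'\in\cB(\bell)}F(\bk')=\sum_{\bk'\in\cB(\bell),\,|\bk'|\le d}F(\bk')=F(\bk).
\]

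The main obstacle is Step~2. It must be argued carefully from the specification of \GBSA{}: \GBSA{} is correct for every input of weight at most $d$, and its result is consistent with the outcomes. Once these properties are stated, the rest is bookkeeping.
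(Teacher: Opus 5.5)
Your proposal is correct and follows the paper's own route. You identify $\cB(\bell)$ with the set of vectors consistent with the \GBSA{} outcomes along $\bell$, use that a \textsc{Result} node pins down a unique degree-$\le d$ vector, and collapse the bin sum using $F(\bk')=0$ for $|\bk'|>d$; you simply make explicit the determinism-plus-correctness argument and the vanishing of high-degree terms that the paper leaves implicit. Your hedge in Step~2 is cleaner stated directly: \FASMT{} only calls \GBSA{} on bins with $V(\bell)\neq 0$, so some $\bk'\in\cB(\bell)$ has $F(\bk')\neq 0$ and hence $|\bk'|\le d$, and your uniqueness argument gives $\bk'=\bk\in\cB(\bell)$ without needing a separate consistency property of \GBSA{}.
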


\begin{proof}
The label $\bell = \ell_1 \cdots \ell_t$ encodes the test outcomes $\bh_i^\top \bk = \ell_i$ along the path to $\cB(\bell)$, where $\bh_i$ is the test vector chosen by \GBSA{} at step $i$. When $\GBSA{}(\bell)$ returns $(\textsc{Result}, \bk)$, \GBSA{} has uniquely identified $\bk$ as the only vector of degree at most $d$ consistent with these outcomes. Thus $\bk$ is the unique coefficient of degree at most $d$ in $\cB(\bell)$, and $V(\bell) = F(\bk)$.
\end{proof}

We now present \FASMT{} (Algorithm~\ref{alg:fasmt}). The algorithm maintains a priority queue of bins ordered lexicographically, splits bins via the residual function, and recovers individual coefficients via \GBSA.

\begin{restatable}[\FASMT{} Correctness and Complexity]{theorem}{fasmttheorem}\label{thm:fasmt}
Let $f: \{0,1\}^n \to \bbR$ be an $s$-sparse polynomial of degree at most $d$. Then Algorithm~\ref{alg:fasmt} recovers all non-zero coefficients using $O(sd \log (n/d))$ adaptive queries and $O((sd + n) \cdot sd \log(n/d))$ time.
\end{restatable}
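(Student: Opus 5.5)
We argue correctness by induction over the order in which Algorithm~\ref{alg:fasmt} extracts bins, and then count the queries by charging every internal node of the explored search tree to a leaf that holds a non-zero coefficient.

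\textbf{Invariant.} When a triple $(\bell, v, \bH)$ is extracted, $v = V(\bell)$ and $T$ contains exactly the non-zero coefficients lying in bins $\cB(\bell')$ with $\bell' \prec \bell$. Some care is needed because the explored tree is not a full tree: leaves are declared by \GBSA{} at varying depths, and pruned bins contain only zero coefficients.

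The induction runs as follows.
\begin{enumerate}[itemsep=0.3ex]
\item \emph{Base case.} The root has $v = f(\bOne) = V(\beps)$ and $T$ is empty.
\item \emph{Lexicographic order of processing.} \textsc{ExtractMin} always returns the lexicographically smallest label in $\cQ$. Children $\bell\|0 \prec \bell\|1$ are inserted only after their parent $\bell$, and any label $\bell' \prec \bell$ that is not a prefix of $\bell$ satisfies $\bell' \prec$ every extension of $\bell$. Hence every bin lexicographically smaller than the current one, apart from its ancestors, has already been fully resolved: it was pruned, peeled, or split into children that were themselves processed earlier.
\item \emph{Pruned bins.} By Assumption~\ref{def:subset-sum-independent}, a bin with $V = 0$ contains no non-zero coefficient. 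Pruning it therefore loses nothing, and its descendants need not be discovered.
\item \emph{Peeled bins.} Lemma~\ref{lem:peeling} gives $T(\bz) = V(\bell) = F(\bz)$ at a \textsc{Result} node. Since $f$ has degree at most $d$, no other non-zero coefficient lies in that bin.
\item \emph{Split bins.} The hypothesis of Lemma~\ref{lem:bin-refine} holds at every \textsc{Test} node, so both children receive correct bin sums.
\end{enumerate}
When $\cQ$ empties, the bins that were peeled or pruned partition $\{0,1\}^n$, so $T$ equals $F$ on all non-zero coefficients.

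\textbf{Query count.} Each split costs exactly one query, and the algorithm makes one extra query $f(\bOne)$ at the root. A split happens only at an active bin, i.e.\ one with $V(\bell) \neq 0$. By Assumption~\ref{def:subset-sum-independent}, such a bin contains at least one non-zero coefficient $\bk$. Its label $\bell$ is a prefix of the \GBSA{} decision path for $\bk$, because each $\bk \in \cB(\bell)$ satisfies the outcomes $\bh_i^\top \bk = \ell_i$ along the path. Every active split node therefore lies on the union of at most $s$ root-to-leaf \GBSA{} paths. By the guarantee on \GBSA{} in Appendix~\ref{sec:gbsa}, each such path has length $O(d\log(n/d))$, which gives $O(sd\log(n/d))$ queries in total.

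\textbf{Running time.} There are $O(sd\log(n/d))$ processed nodes, counting children of splits. Per node, the work is:
\begin{itemize}[itemsep=0.3ex]
\item a \GBSA{} step, costing $O(n)$ if the state can be advanced incrementally; otherwise we charge $O(n)$ amortized, which I expect to be justifiable by storing the \GBSA{} state with each queue entry;
\item forming $\bx$, which is $O(n)$ if we maintain the OR of the columns of $\bH$ selected by the zero entries of $\bell$ incrementally, rather than recomputing it at cost $O(nt)$;
\item evaluating $f_T(\bx)$, which requires checking $\bk \leq \bx$ for each of the at most $s$ stored coefficients. Each $\bk$ has support of size at most $d$, so this costs $O(sd)$.
\end{itemize}
This gives $O(sd+n)$ per node and $O((sd+n)\cdot sd\log(n/d))$ overall. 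Priority-queue overhead adds $O(\log)$ per operation, which is dominated.

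\textbf{Main obstacle.} I expect the hardest step to be the ordering invariant in step~2. We must argue rigorously that when bin $\bell$ is extracted, every coefficient in every bin $\bell' \prec \bell$ that is not an ancestor has been added to $T$, even though the tree is irregular. I would prove this by strong induction on the number of extractions, using the fact that the queue always holds the leaves of the current frontier. Those leaves form an antichain, and $\prec$ is a total order on them that is consistent with subtree containment.
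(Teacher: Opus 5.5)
Your proposal is correct and follows the paper's proof essentially step for step. Correctness comes from the lexicographic processing order feeding Lemma~\ref{lem:bin-refine} and Lemma~\ref{lem:peeling}. The query bound comes from at most $s$ active bins along \GBSA{} paths of length $O(d\log(n/d))$; your union-of-paths charging is a restatement of the paper's width-times-depth count. The time bound comes from $O(n)$ incremental query construction plus $O(sd)$ residual evaluation per node. The one thing to fix is the wording of your invariant: it should quantify only over labels $\bell' \prec \bell$ that are not prefixes of $\bell$, since ancestor bins are lexicographically smaller but contain $\cB(\bell)$, and your step~2 already makes this exclusion.
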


\begin{proof}
\emph{(Sketch.)} The algorithm explores a binary search tree where each node corresponds to a bin. Correctness follows from Lemma~\ref{lem:bin-refine} and~\ref{lem:peeling}. Bin sums are correctly computed via the residual function, and coefficients are uniquely identified when \GBSA{} terminates. Query complexity is $O(sd \log(n/d))$ since the tree has width at most $s$ and depth $O(d \log(n/d))$ by Lemma~\ref{lem:gbsa-complexity}. Time complexity per node is $O(sd + n)$ for constructing queries and computing residuals. See Appendix~\ref{sec:fasmt-proofs} for details.
\end{proof}

\begin{remark}[\PASMT{} vs.\ \FASMT{}]
\PASMT{} uses a non-adaptive group testing design and works with $O(sd^2 \log n)$ total queries collected in $O(d^2 \log n)$ rounds (independent of $s$). \FASMT{} uses adaptive group testing, reducing to $O(sd \log (n/d))$ total queries but requiring strictly sequential execution. The choice depends on whether round complexity or total query count is the bottleneck. We discuss some ideas for implementing parallelization in \FASMT{} in Appendix~\ref{app:hybrid}, though the number of rounds remains linear in $s$.
\end{remark}

\section{Lower Bounds}

We now establish that the query complexity of \FASMT{} is near-optimal by proving an information-theoretic lower bound. The key insight is that any algorithm must gather sufficient information to distinguish among all possible $s$-sparse degree $d$ polynomials. Since the information each query provides is limited by the number of distinct values the query can return, a minimum number of queries is required to uniquely identify the target polynomial.

\begin{restatable}[Query Complexity Lower Bound]{theorem}{lowerboundtheorem}\label{thm:lower-bound}
Let $\cF_{n,s,d}$ denote the class of $s$-sparse polynomials $f: \{0,1\}^n \to \bbR$ of degree at most $d$ with coefficients in $[s] = \{1, 2, \ldots, s\}$. Any deterministic algorithm that recovers $f \in \cF_{n,s,d}$ from evaluation queries must use at least
\begin{equation}
    \Omega\left(\frac{sd \log(n/d)}{\log s}\right)
\end{equation}
queries in the worst case.
\end{restatable}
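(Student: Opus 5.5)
The plan is a counting (decision-tree) argument. A deterministic adaptive algorithm can be viewed as a decision tree: each internal node is a query $\bx$, and each branch is labeled by the value $f(\bx)$ returned. Correct recovery forces distinct functions in $\cF_{n,s,d}$ to reach distinct leaves. So if every query has at most $R$ possible answers over the class and the worst-case depth is $q$, then $R^q \geq |\cF_{n,s,d}|$, which gives $q \geq \log|\cF_{n,s,d}| / \log R$. It remains to upper bound $R$ and lower bound $|\cF_{n,s,d}|$.

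\emph{Bounding the answers.} For $f\in\cF_{n,s,d}$ and any $\bx$, the expansion \eqref{eq:mobius-expansion} writes $f(\bx)$ as a sum of at most $s$ coefficients, each in $[s]$. Hence $f(\bx)\in\{0,1,\dots,s^2\}$ and $R\leq s^2+1$, so $\log R = O(\log s)$. Coefficients in $[s]$ are strictly positive, so every function in the class also satisfies Assumption~\ref{def:subset-sum-independent}. The lower bound therefore applies to the setting of Theorem~\ref{thm:fasmt} as well; it is not an artifact of cancellations.

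\emph{Counting the class.} Restrict to functions with exactly $s$ nonzero coefficients, all supported on sets of size exactly $d$, and with all coefficient values equal to $1$. Distinct support families give distinct functions, because the \AND basis is a basis. The number of such functions is
\[
\binom{\binom{n}{d}}{s} \geq \left(\frac{\binom{n}{d}}{s}\right)^{s} \geq \left(\frac{(n/d)^d}{s}\right)^{s},
\]
so $\log|\cF_{n,s,d}| \geq s\,d\log(n/d) - s\log s$. (The freedom in the coefficient values would add a further $s\log s$, which can be used to absorb this subtraction.) Dividing by $\log R=O(\log s)$ then gives the claimed $\Omega\bigl(sd\log(n/d)/\log s\bigr)$.

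\emph{The main obstacle.} The delicate part is this final accounting. I need $sd\log(n/d)$ to dominate the $s\log s$ correction, and I need $s\le\binom{n}{d}$ for the class to be nonempty with exactly $s$ terms. I would handle it in two steps.
\begin{itemize}
\item First, add back the $s!$-type factor from assigning distinct coefficient values, or note that $s^s$ value assignments exist. This gives $\log|\cF_{n,s,d}|\geq s\log\binom{n}{d} - \log s! + s\log s \ge s\,d\log(n/d)$.
\item Second, handle small $s$ separately. For $s=1$ the denominator $\log s$ is read as $\max(1,\log s)$, and the bound $d\log(n/d)$ follows directly.
\end{itemize}
I also need to check the regime $s>\binom{n}{d}$, or assume $s\le \binom{n}{d}$, which is implicit in the statement.
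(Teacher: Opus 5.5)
Your argument is essentially the paper's: it also lower-bounds the class size by $s^s\binom{N}{s}\ge s^s(N/s)^s=N^s$ with $N\ge\binom{n}{d}\ge(n/d)^d$, so $\log|\cF_{n,s,d}|\ge sd\log(n/d)$, and divides by $\log(s^2+1)\le 2\log s+1$, which is exactly your parenthetical remark. One slip: your first bullet's intermediate bound $s\log\binom{n}{d}-\log s!+s\log s$ is not a valid lower bound, since $\binom{M}{s}\le M^s/s!$ for $M=\binom{n}{d}$; drop it and combine $\binom{M}{s}\ge(M/s)^s$ with the $s^s$ value assignments, which already gives $s\log M\ge sd\log(n/d)$.
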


\begin{proof}
\emph{(Sketch.)} We use a counting argument. There are at least $(n/d)^d$ monomials of degree at most $d$, so specifying an $s$-sparse polynomial with coefficients in $[s]$ requires $\Omega(sd \log(n/d))$ bits. Each evaluation query returns an integer in $\{0, 1, \ldots, s^2\}$, providing at most $O(\log s)$ bits of information. See Appendix~\ref{sec:lower-bound-proof} for details.
\end{proof}

\begin{remark}[Near-optimality of \FASMT{}]
Comparing Theorems~\ref{thm:fasmt} and~\ref{thm:lower-bound}, \FASMT{} achieves query complexity $O(sd \log(n/d))$, matching the lower bound of $\Omega(sd \log(n/d) / \log s)$ up to a factor of $O(\log s)$.
\end{remark}

\section{Application to Hypergraph Reconstruction}
\begin{figure}[ht]
    \centering
    \includegraphics[width=0.95\linewidth]{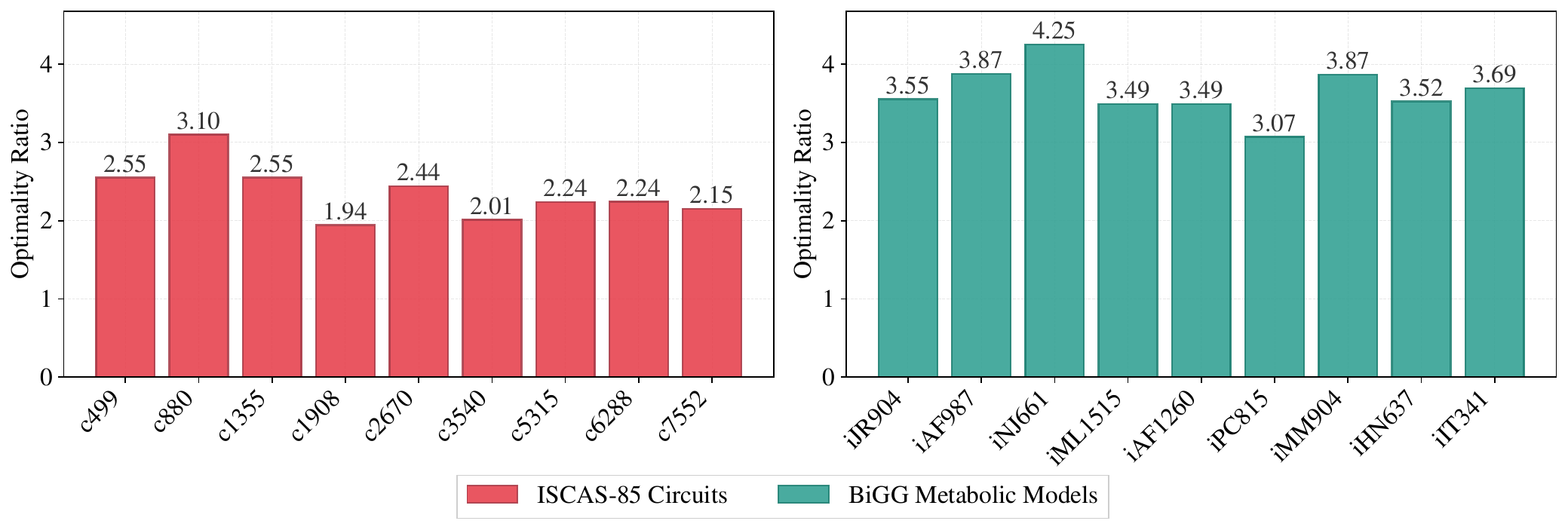}
    \caption{\emph{Optimality Ratio} for \FASMT on sparse hypergraph reconstruction. Each bar corresponds to a distinct hypergraph. We define the \emph{Optimality Ratio} as $\frac{q \log(s)}{sd\log{(n/d)}}$. We evaluate performance on two datasets: the BiGG metabolic pathways ($n \in [485, 1877], d \in [75,331], s\in[29,73]$), modeling compounds as vertices and reactions as hyperedges; and the ISCAS-85 benchmark ($n \in [243, 3719], d \in [9, 17], s\in[211, 3611]$), modeling logic gates as vertices and wires as hyperedges. Further details are provided in Appendix~\ref{app:experiment_details}. Standard compressed sensing methods are computationally intractable for hypergraphs of these dimensions.}
    \label{fig:experiments}
\end{figure}

\PASMT{} and \FASMT{} are constructive, deterministic algorithms for learning. We apply our algorithms to hypergraphs constructed from (i) digital circuits in the ISCAS-85 benchmark~\citep{Brglez1985netlist} and (ii) metabolic networks, which are linked series of chemical reactions that occur within cells, obtained from the BiGG dataset~\citep{King2015BiGGModels}. Results for \FASMT{} are shown in Figure~\ref{fig:experiments}. We perform further simulations over synthetic hypergraphs that verify our query complexity in $s,d$ and $n$ in Appendix~\ref{app:experiment_details}.
\section{Conclusion and Open Questions}
In this work, we presented a constructive framework for learning $s$-sparse degree $d$ polynomials over the \AND function basis. 
We established a lower bound of $\Omega(sd \log(n/d) / \log(s))$ for the query complexity of this learning task and introduced two algorithms that navigate the trade-off between query complexity and adaptivity. \FASMT{} achieves a query complexity of $O(sd \log(n/d))$, matching the information-theoretic lower bound up to a factor of $O(\log s)$ and effectively solving the problem of learning arbitrary rank-$d$ hypergraphs with complexity linear in $d$. For settings requiring parallelization, \PASMT{} leverages $d$-disjunct matrices to recover coefficients in limited rounds, independent of sparsity $s$.

\paragraph{Open questions and future directions.} Our results reveal several directions for future research:
\begin{itemize}
    \item \textbf{Noise robustness}: Our current framework uses an exact additive oracle. A natural extension is to adapt these algorithms to a noisy oracle. Given the direct application of group testing designs in our framework, the application of robust group testing results \citep{scarlett2019noisy}, particularly to \PASMT{}, could yield robust extensions.
    \item \textbf{Adaptivity}: \PASMT{} trades off a query complexity factor of $d$ to remove the dependency on $s$. It is unclear if we can maintain polynomial query complexity in $s$ and $d$ while reducing the number of adaptive rounds further. In particular, fully adaptive algorithms that don't suffer from some of the restrictions of \cite{kang2024learning} would be interesting.
    \item \textbf{Necessity of non-cancellation}: Our algorithms require the affine-slice non-cancellation assumption (Assumption~\ref{def:affine-slice-non-cancel}). It remains open whether this assumption is necessary for efficient learning, or whether there exist polynomial-query algorithms that succeed without it. 
\end{itemize}

% --- Bibliography ---
\newpage
\bibliographystyle{plainnat}
\bibliography{main}

@inproceedings{Hassanieh2012,
  author = {Hassanieh, Haitham and Indyk, Piotr and Katabi, Dina and Price, Eric},
  title = {Simple and Practical Algorithm for Sparse {F}ourier Transform},
  booktitle = {Proceedings of the Twenty-Third Annual ACM-SIAM Symposium on Discrete Algorithms},
  pages = {1183--1194},
  year = {2012},
  publisher = {SIAM},
  doi = {10.1137/1.9781611973099.93}
}

@article{kautz1964nonrandom,
  author = {Kautz, William H. and Singleton, Richard C.},
  title = {Nonrandom Binary Superimposed Codes},
  journal = {IEEE Transactions on Information Theory},
  year = {1964},
  volume = {10},
  number = {4},
  pages = {363--377},
  doi = {10.1109/TIT.1964.1053689}
}

@article{torney1999sets,
  title={Sets pooling designs},
  author={Torney, David C},
  journal={Annals of Combinatorics},
  volume={3},
  number={1},
  pages={95--101},
  year={1999},
  publisher={Springer}
}

@inproceedings{balkanski2022learning,
  title = {Learning Low Degree Hypergraphs},
  author = {Balkanski, Eric and Hanguir, Oussama and Wang, Shatian},
  booktitle = {Proceedings of Thirty Fifth Conference on Learning Theory},
  pages = {419--420},
  year = {2022},
  volume = {178},
  series = {Proceedings of Machine Learning Research},
  publisher = {PMLR}
}

@inproceedings{wendler2021learning,
  title = {Learning Set Functions that are Sparse in Non-Orthogonal {F}ourier Bases},
  author = {Wendler, Chris and Amrollahi, Andisheh and Seifert, Bastian and Krause, Andreas and P{\"u}schel, Markus},
  booktitle = {Proceedings of the AAAI Conference on Artificial Intelligence},
  volume = {35},
  pages = {10283--10292},
  year = {2021}
}

@inproceedings{Pawar2013,
  author = {Pawar, Sameer and Ramchandran, Kannan},
  title = {Computing a $k$-Sparse $n$-Length Discrete {F}ourier Transform Using at Most $4k$ Samples and {$O(k \log k)$} Complexity},
  booktitle = {IEEE International Symposium on Information Theory},
  pages = {464--468},
  year = {2013},
  doi = {10.1109/ISIT.2013.6620269}
}

@inproceedings{li2014spright,
  author = {Li, Xiao and Bradley, Joseph K. and Pawar, Sameer and Ramchandran, Kannan},
  title = {The {SPRIGHT} Algorithm for Robust Sparse {H}adamard Transforms},
  booktitle = {IEEE International Symposium on Information Theory},
  pages = {1857--1861},
  year = {2014},
  doi = {10.1109/ISIT.2014.6875155}
}

@article{scheibler2015fast,
  title = {A Fast {H}adamard Transform for Signals with Sublinear Sparsity in the Transform Domain},
  author = {Scheibler, Robin and Haghighatshoar, Saeid and Vetterli, Martin},
  journal = {IEEE Transactions on Information Theory},
  volume = {61},
  number = {4},
  pages = {2115--2132},
  year = {2015},
  doi = {10.1109/TIT.2015.2404441}
}

@inproceedings{amrollahi2019efficiently,
  title = {Efficiently Learning {F}ourier Sparse Set Functions},
  author = {Amrollahi, Andisheh and Zandieh, Amir and Kapralov, Michael and Krause, Andreas},
  booktitle = {Advances in Neural Information Processing Systems},
  volume = {32},
  year = {2019}
}

@article{rota1964foundations,
  title = {On the Foundations of Combinatorial Theory {I}: Theory of {M}{\"o}bius Functions},
  author = {Rota, Gian-Carlo},
  journal = {Zeitschrift f{\"u}r Wahrscheinlichkeitstheorie und Verwandte Gebiete},
  volume = {2},
  pages = {340--368},
  year = {1964},
  doi = {10.1007/BF00531932}
}

@article{donoho2006compressed,
  title = {Compressed Sensing},
  author = {Donoho, David L.},
  journal = {IEEE Transactions on Information Theory},
  volume = {52},
  number = {4},
  pages = {1289--1306},
  year = {2006},
  doi = {10.1109/TIT.2006.871582}
}

@article{o2021analysis,
  title={Analysis of boolean functions},
  author={O'Donnell, Ryan},
  journal={arXiv preprint arXiv:2105.10386},
  year={2021}
}

@article{candes2005decoding,
  title = {Decoding by Linear Programming},
  author = {Cand{\`e}s, Emmanuel J. and Tao, Terence},
  journal = {IEEE Transactions on Information Theory},
  volume = {51},
  number = {12},
  pages = {4203--4215},
  year = {2005},
  doi = {10.1109/TIT.2005.858979}
}

@inproceedings{goldreich1989hard,
  title = {A Hard-Core Predicate for All One-Way Functions},
  author = {Goldreich, Oded and Levin, Leonid A.},
  booktitle = {Proceedings of the Twenty-First Annual ACM Symposium on Theory of Computing},
  pages = {25--32},
  year = {1989},
  doi = {10.1145/73007.73010}
}

@inproceedings{dyachkov2016adaptive,
  title     = {Adaptive Learning a Hidden Hypergraph},
  author    = {D'yachkov, Arkadii G. and Vorobyev, Ilya V. and Polyanskii, Nikita A. and Shchukin, Vladislav Yu.},
  booktitle = {Proceedings of the Fifteenth International Workshop on Algebraic and Combinatorial Coding Theory},
  pages     = {139--144},
  year      = {2016}
}

@inproceedings{gopalan2008agnostically,
  title = {Agnostically Learning Decision Trees},
  author = {Gopalan, Parikshit and Kalai, Adam Tauman and Klivans, Adam R.},
  booktitle = {Proceedings of the Fortieth Annual ACM Symposium on Theory of Computing},
  pages = {527--536},
  year = {2008},
  doi = {10.1145/1374376.1374451}
}

@inproceedings{kang2024learning,
  title = {Learning to Understand: Identifying Interactions via the {M}{\"o}bius Transform},
  author = {Kang, Justin Singh and Erginbas, Yigit Efe and Butler, Landon and Pedarsani, Ramtin and Ramchandran, Kannan},
  booktitle = {Advances in Neural Information Processing Systems},
  volume = {37},
  year = {2024}
}

@inproceedings{kushilevitz1991learning,
  title = {Learning Decision Trees Using the {F}ourier Spectrum},
  author = {Kushilevitz, Eyal and Mansour, Yishay},
  booktitle = {Proceedings of the Twenty-Third Annual ACM Symposium on Theory of Computing},
  pages = {455--464},
  year = {1991},
  doi = {10.1145/103418.103466}
}

@inproceedings{kocaoglu2014sparse,
  title = {Sparse Polynomial Learning and Graph Sketching},
  author = {Kocaoglu, Murat and Shanmugam, Karthikeyan and Dimakis, Alexandros G. and Klivans, Adam},
  booktitle = {Advances in Neural Information Processing Systems},
  volume = {27},
  year = {2014}
}

@inproceedings{schapire1993learning,
  title = {Learning Sparse Multivariate Polynomials over a Field with Queries and Counterexamples},
  author = {Schapire, Robert E. and Sellie, Linda M.},
  booktitle = {Proceedings of the Sixth Annual Conference on Computational Learning Theory},
  pages = {17--26},
  year = {1993},
  doi = {10.1145/168304.168307}
}

@article{chin2013nonadaptive,
  title = {Non-Adaptive Complex Group Testing with Multiple Positive Sets},
  author = {Chin, Francis Y. L. and Leung, Henry C. M. and Yiu, S. M.},
  journal = {Theoretical Computer Science},
  volume = {505},
  pages = {11--18},
  year = {2013},
  doi = {10.1016/j.tcs.2013.04.011}
}

@book{du2006pooling,
  title = {Pooling Designs and Nonadaptive Group Testing: Important Tools for {DNA} Sequencing},
  author = {Du, Ding-Zhu and Hwang, Frank K.},
  year = {2006},
  publisher = {World Scientific}
}

@inproceedings{abasi2014exact,
  title = {On Exact Learning Monotone {DNF} from Membership Queries},
  author = {Abasi, Hasan and Bshouty, Nader H. and Mazzawi, Hanna},
  booktitle = {Algorithmic Learning Theory},
  series = {Lecture Notes in Computer Science},
  volume = {8776},
  pages = {111--124},
  year = {2014},
  publisher = {Springer},
  doi = {10.1007/978-3-319-11662-4_9}
}

@article{angluin2008learning,
  title = {Learning a Hidden Graph Using {$O(\log n)$} Queries per Edge},
  author = {Angluin, Dana and Chen, Jiang},
  journal = {Journal of Computer and System Sciences},
  volume = {74},
  number = {4},
  pages = {546--556},
  year = {2008},
  doi = {10.1016/j.jcss.2007.06.006}
}

@article{bshouty2012toward,
  title = {Toward a Deterministic Polynomial Time Algorithm with Optimal Additive Query Complexity},
  author = {Bshouty, Nader H. and Mazzawi, Hanna},
  journal = {Theoretical Computer Science},
  volume = {417},
  pages = {23--35},
  year = {2012},
  doi = {10.1016/j.tcs.2011.10.001}
}

@inproceedings{mazzawi2010optimally,
  title = {Optimally Reconstructing Weighted Graphs Using Queries},
  author = {Mazzawi, Hanna},
  booktitle = {Proceedings of the Twenty-First Annual ACM-SIAM Symposium on Discrete Algorithms},
  pages = {608--615},
  year = {2010},
  doi = {10.1137/1.9781611973075.51}
}

@article{scarlett2019noisy,
  title = {Noisy Adaptive Group Testing: Bounds and Algorithms},
  author = {Scarlett, Jonathan},
  journal = {IEEE Transactions on Information Theory},
  volume = {65},
  number = {6},
  pages = {3646--3661},
  year = {2019},
  doi = {10.1109/TIT.2018.2883604}
}

@article{dorfman1943,
  title = {The Detection of Defective Members of Large Populations},
  author = {Dorfman, Robert},
  journal = {The Annals of Mathematical Statistics},
  volume = {14},
  number = {4},
  pages = {436--440},
  year = {1943},
  doi = {10.1214/aoms/1177731363}
}

@article{hwang1972,
  title = {A Method for Detecting All Defective Members in a Population by Group Testing},
  author = {Hwang, Frank K.},
  journal = {Journal of the American Statistical Association},
  volume = {67},
  number = {339},
  pages = {605--608},
  year = {1972},
  doi = {10.1080/01621459.1972.10481257}
}

@inproceedings{indyk2010efficiently,
  author = {Indyk, Piotr and Ngo, Hung Q. and Rudra, Atri},
  title = {Efficiently Decodable Non-Adaptive Group Testing},
  booktitle = {Proceedings of the Twenty-First Annual ACM-SIAM Symposium on Discrete Algorithms},
  pages = {1126--1142},
  year = {2010}
}

@inproceedings{porat2008explicit,
  title = {Explicit Non-Adaptive Combinatorial Group Testing Schemes},
  author = {Porat, Ely and Rothschild, Amir},
  booktitle = {International Colloquium on Automata, Languages, and Programming},
  series = {Lecture Notes in Computer Science},
  volume = {5125},
  pages = {748--759},
  year = {2008},
  publisher = {Springer},
  doi = {10.1007/978-3-540-70575-8_61}
}

@inproceedings{Brglez1985netlist,
    author = {Brglez, F. and Fujiwara, H.},
    booktitle = {Proceedings of IEEE Int'l Symposium Circuits and Systems (ISCAS 85)},
    pages = {677--692},
    publisher = {IEEE Press, Piscataway, N.J.},
    title = {{A Neutral Netlist of 10 Combinational Benchmark Circuits and a Target Translator in Fortran}},
    year = {1985}
}

@article{King2015BiGGModels,
  author  = {King, ZA and Lu, JS and Dr{\"a}ger, A and Miller, PC and Federowicz, S and Lerman, JA and Ebrahim, A and Palsson, BO and Lewis, NE},
  title   = {BiGG Models: A platform for integrating, standardizing, and sharing genome-scale models},
  journal = {Nucleic Acids Research},
  year    = {2015},
  doi     = {10.1093/nar/gkv1049}
}

% --- Appendix ---
\newpage
\appendix
\appendix

\section{Extended Related Work} \label{app:related_work}

Our work lies at the intersection of several research areas in computational learning theory and combinatorics. We organize the related literature as follows: First, we review algorithms for learning sparse polynomials, distinguishing between the \AND basis (Möbius transform) over $\{0,1\}^n$ and the XOR parity basis (Fourier transform) over $\{-1,1\}^n$. We then discuss the closely related problem of learning monotone DNF formulas and its connection to polynomial learning via reduction from membership to evaluation queries. Finally, we analyze how existing complexity bounds behave across different scaling regimes and discuss the practical implications for real-world applications.

\paragraph{Learning sparse polynomials over $\{0,1\}^n$.}
Learning polynomials over $\{0,1\}^n$ corresponds to decomposing functions in the \AND basis, where each basis function $\chi_S(\bx) = \prod_{i \in S} x_i$ for $S \subseteq [n]$ and $\bx \in \{0,1\}^n$ computes the conjunction of variables in $S$. This is equivalent to Möbius inversion. In hypergraph reconstruction, \emph{induced subgraph queries} that return $|\{e \in E : e \subseteq S\}|$ for a vertex subset $S$ correspond exactly to polynomial evaluation over $\{0,1\}^n$. Notably, many optimal algorithms in the literature for this problem are nonconstructive, proving upper bounds on query complexity only. These results establish the existence of a set of queries whose answers uniquely identify the sparse polynomial, but it remains unknown how to find such queries deterministically in polynomial time or how to recover the function from the answers efficiently.

Among constructive approaches, for the special case of degree $d = 1$ (linear polynomials), \cite{bshouty2012toward} establish a lower bound of $\Omega(s \log n / \log s)$ queries and provide a polynomial time adaptive algorithm with almost optimal $O(s \log n / \log s + s \log \log s)$ queries. For the $d=2$-uniform case, which corresponds to weighted graph reconstruction, \cite{mazzawi2010optimally} prove a lower bound of $\Omega(s \log(n^2/s) / \log s)$ queries and give a polynomial time algorithm with matching query complexity. At the other extreme, \cite{wendler2021learning} consider learning sparse polynomials without any degree restriction (i.e., $d = n$), achieving query complexity $O(sn)$ with adaptive queries and time complexity $O(s^2n)$. The only work that considers the low-degree case beyond $d=2$ is \cite{kang2024learning}, which develops an algorithm with query complexity $O(sd\log n)$ using non-adaptive queries and time complexity $O(s\poly(n))$, but requires the assumption that the location of non-zero coefficients are uncorrelated. This assumption is problematic for hypergraph learning, where real-world hypergraphs exhibit correlated edge structures. In this work, we combine ideas from \cite{wendler2021learning} and \cite{kang2024learning} to achieve query complexity $O(sd\log n)$.

\paragraph{Learning sparse polynomials over $\{-1,1\}^n$.}
Learning polynomials over $\{-1,1\}^n$ corresponds to decomposing functions in the XOR parity basis, where each basis function $\chi_S(\bx) = \prod_{i \in S} x_i$ for $S \subseteq [n]$ and $\bx \in \{-1,1\}^n$ computes the parity of variables in $S$. This is equivalent to the Boolean Fourier transform and has been widely studied \citep{kushilevitz1991learning, schapire1993learning, gopalan2008agnostically, kocaoglu2014sparse, li2014spright, scheibler2015fast, amrollahi2019efficiently}. In hypergraph reconstruction, \emph{cut queries} that return $|\{e \in E : e \cap S \neq \emptyset, e \cap S^c \neq \emptyset\}|$ for a vertex partition $(S, S^c)$ correspond to the Fourier transform setting. \cite{amrollahi2019efficiently} provides an algorithm with query complexity $O(sd\log n)$ using non-adaptive queries and time complexity $O(sn)$. \cite{kocaoglu2014sparse} achieves similar query complexity with randomly selected queries. However, when applied to hypergraph learning via cut queries, the complexity of these Fourier-based methods scales at least like $2^d$.

\paragraph{Learning sparse monotone DNF.}
A monotone DNF formula with $s$ terms of size at most $d$ is a Boolean function of the form $f(\bx) = \bigvee_{S \in \cS} \bigwedge_{i \in S} x_i$, where $\cS \subseteq 2^{[n]}$ satisfies $|\cS| \leq s$ and $|S| \leq d$ for all $S \in \cS$. In the membership query model, each query $\bx \in \{0,1\}^n$ returns $f(\bx) \in \{0,1\}$. This model corresponds to \emph{edge-detection queries} in hypergraph reconstruction, where a query returns $\one\{\exists e \in E : e \subseteq S\}$ for a given vertex subset $S$. This problem was introduced by \cite{torney1999sets}.

The query complexity of this problem is well understood. For non-adaptive algorithms, let $N(s,d) \coloneqq \frac{s+d}{\log \binom{s+d}{d}} \binom{s+d}{d}$, where the quantity $N(s,d) \log n$ corresponds to the minimum size of an $(s,d)$-cover free family over $n$ elements. \cite{du2006pooling} establish a lower bound of $\Omega(N(s,d) \log n)$ queries, and \cite{chin2013nonadaptive} provide a matching upper bound of $O(N(s,d) \log n)$. For adaptive algorithms, \cite{angluin2008learning} prove a lower bound of $\Omega\left((2s/d)^{d/2} + sd \log n\right)$ queries. \cite{abasi2014exact} give algorithms that are asymptotically optimal for fixed $s$ and $d$, achieving query complexity $O\left(\binom{s+d}{s} \sqrt{sd} \log(sd) + sd \log n\right)$ when $s < d$ and $O\left((cs)^{d/2 + 0.75} + sd \log n\right)$ for some constant $c$ when $s \geq d$.

\paragraph{Reduction from membership to evaluation queries.}
Algorithms for learning monotone DNF can be adapted to learn polynomials over $\{0,1\}^n$, since evaluation queries are strictly more informative than membership queries. For polynomials satisfying our assumption, membership responses can be simulated by $\one\{f(\bx) \neq 0\}$. After identifying the support of all $s$ non-zero coefficients, $O(s)$ additional evaluation queries recover the coefficient values via Möbius inversion. Since any algorithm that recovers the support must have $\Omega(s)$ query complexity, recovering the coefficient values comes at no additional cost.

\paragraph{Scaling regimes.}
The complexity bounds discussed above reveal fundamental limitations when $s$ and $d$ scale with $n$. The adaptive algorithms of \cite{abasi2014exact} for monotone DNFs are only optimal for fixed $s$ and $d$; when these parameters grow, the gap between upper and lower bounds widens. The result of \cite{wendler2021learning} does not exploit low degree, achieving $O(sn)$ query complexity regardless of $d$. While \cite{kang2024learning} achieves $O(sd \log n)$ query complexity, their algorithm requires very restrictive independence assumptions. Table~\ref{tab:scaling_regimes} summarizes the query complexity landscape across different scaling regimes, highlighting where existing methods succeed or fail.

\begin{table}[ht]
\centering
\setlength{\tabcolsep}{12pt}
\renewcommand{\arraystretch}{1.3}
\begin{tabular}{llccc}
\toprule
 & & \multicolumn{3}{c}{$d$ (Degree)} \\
\cmidrule(l){3-5}
$s$ (Sparsity) & & $\Theta(1)$ & $\Theta(\log n)$ & $\Theta(n^\rho)$ \\
\midrule
\multirow{2}{*}{$\Theta(1)$} & Best & $O(\log n)^\ddagger$ & $(\log n)^{O(1)}$$^\ddagger$ & $O(n)^\dagger$ \\
 & \textcolor{red!70!black}{Ours} & \textcolor{red!70!black}{$O(\log n)$} & \textcolor{red!70!black}{$O(\log^2 n)$} & \textcolor{red!70!black}{$O(n^\rho \log n)$} \\
\midrule
\multirow{2}{*}{$\Theta(\log n)$} & Best & $(\log n)^{O(1)}$$^\ddagger$ & $O(n \log n)^\dagger$ & $O(n \log n)^\dagger$ \\
 & \textcolor{red!70!black}{Ours} & \textcolor{red!70!black}{$O(\log^2 n)$} & \textcolor{red!70!black}{$O(\log^3 n)$} & \textcolor{red!70!black}{$O(n^\rho \log^2 n)$} \\
\midrule
\multirow{2}{*}{$\Theta(n^{\alpha})$} & Best & $n^{O(\alpha)}$$^\ddagger$ & $O(n^{1+\alpha})^\dagger$ & $O(n^{1+\alpha})^\dagger$ \\
 & \textcolor{red!70!black}{Ours} & \textcolor{red!70!black}{$O(n^\alpha \log n)$} & \textcolor{red!70!black}{$O(n^\alpha \log^2 n)$} & \textcolor{red!70!black}{$O(n^{\alpha+\rho} \log n)$} \\
\bottomrule
\end{tabular}
\caption{Comparison of query complexity between the best known algorithms in the literature and ours across different scaling regimes of sparsity $s$ and degree $d$. Here $\alpha > 0$ and $0 < \rho < 1$. $^\dagger$From \citet{wendler2021learning}. $^\ddagger$From \citet{abasi2014exact}.}
\label{tab:scaling_regimes}
\end{table}

\paragraph{Why scaling matters for real-world applications.}
In practical applications, the sparsity $s$ and degree $d$ often scale with the number of variables $n$. For instance, in social networks, the number of communities or cliques typically grows with the number of users. In biological networks, the number of protein complexes scales with the number of proteins. Similarly, hyperedge sizes may grow with the network size in applications where group interactions become larger as more entities participate. Algorithms with complexity $O(sd \log n)$ provide robustness across different scaling regimes, whereas algorithms with $2^d$ or $s^{O(d)}$ dependence become impractical when $d$ grows with $n$.

\section{Generalized Binary Splitting Algorithm}\label{sec:gbsa}

Hwang's Generalized Binary Splitting Algorithm (\GBSA{})~\citep{hwang1972} is a classical adaptive group testing algorithm that identifies an unknown vector $\bk \in \{0,1\}^n$ with at most $d$ ones. The algorithm has access to a test oracle $\textsc{Test}(\bh) = \bh^\top \bk$ that, given a test vector $\bh \in \{0,1\}^n$, returns $1$ if the supports of $\bh$ and $\bk$ intersect and $0$ otherwise. The algorithm partitions $[n]$ into $d$ subsets and processes each in turn. Within each subset, it uses binary splitting to locate ones one at a time, removing each discovered index before searching for the next.

\begin{algorithm2e}[ht]
    \DontPrintSemicolon
    \caption{Hwang's Generalized Binary Splitting Algorithm (\GBSA{})}
    \label{alg:gbsa}
    \KwIn{Upper bound $d$ on $|\bk|$, test oracle $\textsc{Test}(\bh) = \bh^\top \bk$}
    \KwOut{Vector $\bk \in \{0,1\}^n$}
    \SetKwProg{Fn}{Function}{:}{}
    \Fn{\textsc{BinarySplitting}($W$)}{
        \While{$|W| > 1$}{
            $W_L \gets$ first $\lceil |W|/2 \rceil$ elements of $W$\;
            \lIf{$\textsc{Test}(\bOne_{W_L}) = 1$}{$W \gets W_L$}
            \lElse{$W \gets W \setminus W_L$}
        }
        \Return{the single element in $W$}
    }
    $S \gets \emptyset$\;
    Partition $[n]$ into $d$ subsets $P_1, \ldots, P_d$ of size $\lceil n/d \rceil$ or $\lfloor n/d \rfloor$\;
    \For{$i = 1, \ldots, d$}{
        \While{$\textsc{Test}(\bOne_{P_i}) = 1$}{
            $u^* \gets \textsc{BinarySplitting}(P_i)$ \tcp*{Find one index in $\supp(\bk) \cap P_i$}
            $S \gets S \cup \{u^*\}$\;
            $P_i \gets P_i \setminus \{u^*\}$ \tcp*{Remove and repeat}
        }
    }
    \Return{$\bOne_S$}
\end{algorithm2e}

\begin{lemma}[\GBSA{} Query Complexity]\label{lem:gbsa-complexity}
Algorithm~\ref{alg:gbsa} recovers an unknown vector $\bk \in \{0,1\}^n$ with at most $d$ ones using $O(d \log(n/d))$ adaptive tests.
\end{lemma}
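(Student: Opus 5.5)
Correctness and the query count are handled separately. Let $S = \supp(\bk)$ with $|S| \le d$, and write $d_i = |S \cap P_i|$, so that $\sum_i d_i = |S| \le d$ and $|P_i| \le \lceil n/d \rceil$.

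\emph{Correctness.} Fix a block $P_i$. \textsc{BinarySplitting}$(W)$ is only invoked when $\textsc{Test}(\bOne_W)=1$, i.e.\ when $W \cap S \neq \emptyset$. Each iteration keeps a half that still meets $S$: if $\textsc{Test}(\bOne_{W_L})=1$ we keep $W_L$, and otherwise $S \cap W \subseteq W \setminus W_L$. Hence the returned singleton lies in $S \cap P_i$. Removing it strictly decreases $|S \cap P_i|$, so the while loop runs exactly $d_i$ times plus one final failing test. It terminates with every element of $S \cap P_i$ collected and no false element added. Taking the union over the $d$ blocks gives $\bOne_S = \bk$.

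\emph{Query count.} Block $i$ costs $d_i + 1$ tests of $\bOne_{P_i}$, plus $d_i$ calls to \textsc{BinarySplitting}. Each call uses at most $\lceil \log_2 |P_i| \rceil \le \log_2(n/d) + 2$ tests, since every iteration halves $|W|$ (up to ceiling). The total is therefore
\[
\sum_{i=1}^d \bigl(d_i + 1 + d_i(\log_2(n/d)+2)\bigr) \le d + d\bigl(\log_2(n/d)+3\bigr) = O\bigl(d\log(n/d) + d\bigr).
\]

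\emph{Main obstacle: the additive $d$ term.} This term comes from the $d$ blocks that may each consume a test even when $d_i = 0$. It is absorbed into $O(d\log(n/d))$ whenever $n/d \ge 2$. If $d$ is comparable to $n$ (say $n < 2d$), the statement should be read with $\log(n/d)$ replaced by $\max\{1, \log(n/d)\}$, i.e.\ $O(d\log(n/d) + d)$. In that regime one may also simply test each coordinate individually with $n \le 2d$ tests, which still gives $O(d)$. I would state this convention explicitly so the bound is valid in all regimes, including $d$ close to $n$.

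Adaptivity needs no separate argument. Each test depends on prior outcomes only through the current $W$, the current $P_i$ and the index $i$. This is consistent with the deterministic decision-tree view $\GBSA(\bell)$ used in the paper, and the tree depth equals the count bounded above.
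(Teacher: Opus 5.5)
Your proposal is correct and uses the same per-block counting as the paper: the cost of block $P_i$ is expressed via $d_i = |\supp(\bk)\cap P_i|$ and summed using $\sum_i d_i \le d$. It is more careful than the paper's sketch, because you add the correctness invariant for \textsc{BinarySplitting} and count $d_i+1$ tests of $\bOne_{P_i}$ per block, whereas the paper counts only $d_i$ and claims $O(d_i\log(n/d))$ per block, which fails when $d_i=0$; your caveat that the resulting additive $O(d)$ term is absorbed only when $n/d$ is bounded away from $1$ is therefore a legitimate refinement of the stated bound.
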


\begin{proof}
The algorithm partitions $[n]$ into $d$ subsets of size at most $\lceil n/d \rceil$. For each subset $P_i$, if $|\supp(\bk) \cap P_i| = d_i$, the algorithm performs one initial test, followed by $d_i$ rounds of binary splitting (each requiring $O(\log(n/d))$ tests) and $d_i - 1$ additional tests to detect remaining ones. The total number of tests for subset $P_i$ is $O(d_i \log(n/d))$. Summing over all subsets and using $\sum_i d_i \leq d$, the total is $O(d \log(n/d))$.
\end{proof}

\section{Generic Assumption}

\begin{assumption}[Affine-slice non-cancellation]\label{def:affine-slice-non-cancel}
A function $f$ satisfies \textbf{affine-slice non-cancellation} if the sum of its Möbius coefficients over any affine slice is non-zero, unless the slice contains no coefficients. That is, for any matrix $\bH \in \{ 0,1\}^{j \times n}$ and vector $\bell \in \{ 0,1\}^j$ ($1 \leq j \leq n$), if the set $\{\bk : \bH^{\top}\bk = \bell \}$ contains non-zero coefficients, then:
\begin{equation}\label{eq:no-cancel-aff}
\sum_{\bk : \bH^{\top}\bk = \bell} F(\bk) \neq 0.
\end{equation}
\end{assumption}

\section{Missing Proofs}
\subsection{Proofs for PASMT}\label{sec:pasmt-proofs}

\binrefinementlemma*

\begin{proof}
By Lemma~\ref{lem:query-constraint}, $f(\bx_{\bell}) = \sum_{\bH_{t+1}^\top \bk \leq \bell \| 0} F(\bk)$. The last component of this constraint requires $\bh_{t+1}^\top \bk = 0$, which means only coefficients belonging to left child bins can contribute to the sum. Since the non-zero coefficients are partitioned among the left child bins $\{\cB(\bell' \| 0)\}_{\bell' \in \cL}$, we can analyze each bin's contribution separately.

Consider a coefficient $\bk$ in left child bin $\cB(\bell' \| 0)$ for some $\bell' \in \cL$. By definition of bin membership, $\bH_{t+1}^\top \bk = \bell' \| 0$. Therefore, the constraint $\bH_{t+1}^\top \bk \leq \bell \| 0$ holds if and only if $\bell' \leq \bell$. This means that either all coefficients in $\cB(\bell' \| 0)$ contribute to $f(\bx_{\bell})$, or none do. We conclude that $f(\bx_{\bell}) = \sum_{\bell' \in \cL : \bell' \leq \bell} V(\bell' \| 0)$. Since the lexicographic order $\prec$ is a refinement of the partial order $\leq$, this linear system is lower-triangular with ones on the diagonal.
\end{proof}

\pasmttheorem*

\begin{proof}
\textit{Correctness.} By induction, $\bv$ maintains the correct bin sums: the base case $f(\bOne) = V(\beps)$ holds, and Lemma~\ref{lem:measurement} ensures correct recovery of child bin sums at each round. After $b$ rounds, the $d$-disjunct property of $\bH$ guarantees that each non-zero coefficient is isolated in a distinct bin, and $\textsc{Decode}(\bH, \bell)$ uniquely recovers its location.

\textit{Query Complexity.} The algorithm makes one query per active bin at each round. Since empty bins are pruned immediately and each non-zero coefficient belongs to exactly one bin, the number of active bins is at most $s$ throughout. Over $b = O(d^2 \log n)$ rounds, the total query count is $O(sb) = O(sd^2 \log n)$.

\textit{Time Complexity.} 
It is known that a $d$-disjunct group testing matrix with $b=O(d^2 \log n)$ tests can be deterministically constructed in $O(dn\log n)$ time \citep{porat2008explicit}.
Furthermore, a naive decoding algorithm with $O(bn)$ time complexity can easily be implemented for disjunct matrices \citep{indyk2010efficiently}. 
Since the algorithm runs the decoding procedure once for each of the $s$ coefficients, it results in $O(s d^2 n \log n)$ total time complexity for decoding. Next, we analyze the time complexity of bin refinement per round. Let $k = |\cL|$ be the number of active bins at some round. Given the query vectors and the measurement matrix from the previous round, constructing the $k \leq s$ query vectors requires $O(kn)$ time, computing the measurement matrix $\bM$ requires $O(k^2)$ time, and back-substitution requires $O(k^2)$ time. Summing over $b$ rounds gives $O(b(sn + s^2)) = O((s^2 + sn) d^2 \log n)$ total time. 
\end{proof}

\subsection{Proofs for FASMT}\label{sec:fasmt-proofs}

\fasmtbinrefine*

\begin{proof}
By Lemma~\ref{lem:query-constraint} (Subsampling), the query $f(\bx)$ sums over all $\bk$ satisfying $(\bH \| \bh)^\top \bk \leq \bell \| 0$. The residual function sums over undiscovered coefficients satisfying this constraint:
\[
    f_{T}(\bx) = \sum_{\text{undiscovered } \bk : (\bH \| \bh)^\top \bk \leq \bell \| 0} F(\bk).
\]
Since bins are processed lexicographically, each undiscovered $\bk$ belongs either to $\cB(\bell)$ or to some pending bin $\cB(\bell')$ with $\bell' \succ \bell$. For $\bk \in \cB(\bell)$, we have $\bH^\top \bk = \bell$ by definition of bin membership, so $(\bH \| \bh)^\top \bk = \bell \| (\bh^\top \bk)$. The constraint holds iff $\bh^\top \bk = 0$, characterizing $\cB(\bell \| 0)$. Now consider $\bk \in \cB(\bell')$ for some pending bin with $\bell' \succ \bell$. Since descendant bins are only created after $\cB(\bell)$ is split, $\bell$ cannot be a prefix of $\bell'$. Therefore $\bell$ and $\bell'$ differ at some position $j \leq t$, with $\ell_j = 0$ and $\ell'_j = 1$ at the first such position. This gives $(\bH^\top \bk)_j = \ell'_j = 1 > 0 = \ell_j$, violating the constraint. Thus $f_{T}(\bx) = V(\bell \| 0)$, and $V(\bell \| 1) = V(\bell) - V(\bell \| 0)$ follows from the partition property.
\end{proof}

\fasmttheorem*

\begin{proof}
\textit{Correctness.} The algorithm maintains a priority queue of bins ordered lexicographically. For each bin $\cB(\bell)$, if \GBSA{} returns a test vector, Lemma~\ref{lem:bin-refine} ensures that the child bin sums $V(\bell \| 0)$ and $V(\bell \| 1)$ are correctly computed via the residual function. If \GBSA{} returns a result, Lemma~\ref{lem:peeling} guarantees that the returned vector $\bk$ is the unique coefficient of degree at most $d$ in the bin, and $V(\bell) = F(\bk)$. Since bins are processed in lexicographic order, all coefficients in earlier bins have been discovered and stored in $T$, ensuring the residual function correctly subtracts their contributions.

\textit{Query Complexity.} The algorithm explores a binary search tree where each node corresponds to a bin label $\bell$. At any depth, each non-zero coefficient belongs to exactly one bin, and zero-sum bins are pruned by the subset-sum independent coefficients condition. Thus, the width at any depth is at most $s$. By Lemma~\ref{lem:gbsa-complexity}, \GBSA{} terminates after $O(d \log(n/d))$ tests for any coefficient of degree at most $d$, so the tree has depth $O(d \log(n/d))$. Since the algorithm makes one query per node, the total query complexity is $O(sd \log(n/d))$.

\textit{Time Complexity.} Constructing the query vector for the root node requires $O(n)$ time. Similarly, given the query vector of the parent node, constructing a new query vector requires $O(n)$ time. Computing the residual function $f_T(\bx) = f(\bx) - \sum_{\bk \leq \bx} T(\bk)$ requires checking each of the at most $s$ discovered coefficients, and for each coefficient $\bk$ of degree at most $d$, verifying $\bk \leq \bx$ takes $O(d)$ time. Thus, each residual computation takes $O(sd)$ time. Summing over all queries gives $O((sd + n) \cdot sd \log(n/d))$ total time.
\end{proof}

\subsection{Proof of Lower Bound}\label{sec:lower-bound-proof}

We restate and prove the query complexity lower bound.

\lowerboundtheorem*

\begin{proof}
We proceed by a counting argument. The algorithm must distinguish among all polynomials in $\cF_{n,s,d}$ using evaluation queries.

\textit{Step 1: Counting polynomials.}
The number of monomials of degree at most $d$ over $n$ variables is $N = \sum_{i=0}^{d} \binom{n}{i} \geq \binom{n}{d} \geq (n/d)^d$.
The number of $s$-sparse polynomials with support chosen from these $N$ monomials and coefficients in $[s]$ is $s^s \binom{N}{s}$.
Note that each such polynomial satisfies Assumption~\ref{def:subset-sum-independent}, since we have non-negative coefficients.
The total number of  bits of information required to specify such a polynomial is
\begin{equation}
    \log\left(s^s \binom{N}{s}\right) = s \log s + \log \binom{N}{s} \geq s \log s + s \log(N/s) = s \log N \geq sd \log(n/d).
\end{equation}

\textit{Step 2: Information per query.} Each evaluation query $f(\bx)$ returns an integer in $\{0, 1, \ldots, s^2\}$, since $f(\bx) = \sum_{\bk \leq \bx} F(\bk)$ is a sum of at most $s$ coefficients, each at most $s$. Thus, each query provides at most $\log(s^2+1) \leq 2\log s + 1$ bits of information.

\textit{Step 3: Query lower bound.} Let $q$ denote the number of queries. The total information gathered is at most $q(2\log s + 1)$ bits. For the algorithm to succeed, we require
\begin{equation}
    q (2\log s + 1) \geq sd \log(n/d).
\end{equation}
Rearranging yields
\begin{equation}
    q \geq \frac{sd \log(n/d)}{2\log s + 1} \in \Omega\left(\frac{sd \log(n/d)}{\log s}\right).
\end{equation}
\end{proof}

\section{The Incoherence-Observability Tradeoff}\label{app:random_const}

We discuss standard Compressed Sensing guarantees and the difficulty of using the \AND basis with random Bernoulli measurement designs. The failure arises from a fundamental "double bind": the sampling rate $p$ must be low to decorrelate nested features (satisfy RIP) but must be high to observe them (satisfy observability).

    Consider the restricted problem of distinguishing between two coefficients $c_A$ and $c_B$ corresponding to nested sets $A \subset B \subseteq [n]$, where $|A| = d-1$ and $|B| = d$. We analyze the tension between the geometric requirement (incoherence) and the statistical requirement (observability).

    \paragraph{1. The Incoherence Requirement (Upper Bound on $p$).}
    A necessary condition for RIP is that the coherence $\mu$ between any two normalized column vectors $\hat{\phi}_A, \hat{\phi}_B$ must be bounded by the isometry constant $\delta$. For the \AND basis, the expected coherence is determined by the overlap of supports:
    \begin{equation}
        \mathbb{E}[\mu(\phi_A, \phi_B)] = \frac{\mathbb{E}[\langle \phi_A, \phi_B \rangle]}{\sqrt{\mathbb{E}[\|\phi_A\|^2]\mathbb{E}[\|\phi_B\|^2]}} = \frac{p^{|B|}}{\sqrt{p^{|A|} p^{|B|}}} = \sqrt{p^{|B|-|A|}}.
    \end{equation}
    For our nested sets with $|B| - |A| = 1$, this simplifies to $\mathbb{E}[\mu] = \sqrt{p}$. To maintain an isometry constant $\delta$ (typically $\delta < \sqrt{2}-1$ for sparse recovery), we strictly require:
    \begin{equation} \label{eq:coherence}
        \sqrt{p} \leq \delta \implies p \leq \delta^2.
    \end{equation}
    Thus, $p$ must be a small constant bounded away from 1. If $p \to 1$, the columns become asymptotically collinear ($\mu \to 1$), violating RIP.

    \paragraph{2. The Observability Requirement (Lower Bound on $m$).}
    To uniquely identify the coefficients, the measurement design must produce at least one sample $x$ where the basis functions differ (i.e., $\phi_A(x)=1$ and $\phi_B(x)=0$). If no such sample is observed, the columns are identical on the observed support, rendering the system singular. The probability of this distinguishing event $E$ for a single sample is:
    \begin{equation}
        \mathbb{P}(E) = \mathbb{P}\left(\bigwedge_{i \in A} x_i = 1 \land \bigwedge_{j \in B \setminus A} x_j = 0\right) = p^{d-1}(1-p).
    \end{equation}
    To succeed with constant probability, the number of measurements $m$ must satisfy $m \gtrsim 1/\mathbb{P}(E)$. Substituting the incoherence constraint $p \leq \delta^2$ from Eq. \ref{eq:coherence}:
    \begin{equation}
        m \gtrsim \frac{1}{(\delta^2)^{d-1}(1-\delta^2)} \approx \delta^{-2(d-1)}.
    \end{equation}
    Since $\delta$ must be small (e.g., $\delta \approx 0.4$), the sample complexity $m$ scales as $\Omega((1/\delta^2)^d)$, which is exponential in the degree $d$.

    \paragraph{Conclusion.}
    Random designs face an unavoidable trade-off. Increasing $p$ improves observability but destroys the geometric structure (RIP) required for efficient recovery algorithms. Decreasing $p$ restores the geometry but renders the features statistically unobservable. This necessitates the use of adaptive designs (such as group testing) which can actively seek out differences between nested sets.

\iffalse
\newpage
\begin{landscape}
\begin{table}[]
    \centering
    \begin{tabular}{c c c c c c}
    \toprule
       Reference  & Time Complexity & Query Complexity & Query Model & Assumptions & Guarantee\\
        \midrule
         \addlinespace[1em]
         \cite{kocaoglu2014sparse} & $O(2^{2sd} + n^2\log{n})$ & $O(2^{sd}d\log{n} + 2^{2d+1}s^2(\log{n} + sd))$& Random Cuts & None & High Probability\\
         \addlinespace[1em]
         \cite{amrollahi2019efficiently} & $O(s2^dn\log^2 s \log n \log d)$ & $O(sd2^d\log{n})$ & Non-Adaptive Cuts & None & High Probability\\
         \addlinespace[1em]
         \midrule
         \addlinespace[1em]
            \cite{wendler2021learning} & $O(s^2n)$ & $O(sn)$ & Adaptive Subgraphs & None & Exact\\
         \addlinespace[1em]
            \cite{kang2024learning} & $O(s n^3)$ & $O(sd\log(n))$ & Non-Adaptive Subgraphs & i.i.d. Hypergraph & High Probability\\
            \addlinespace[1em]
            This work & $O(sdn\log(n))$? & $O(sd\log(n/d))$ & Adaptive Subgraphs & None & Exact\\
            \addlinespace[0.5em]

         \bottomrule\\
    \end{tabular}
    \caption{DOUBLE CHECK THESE VALUES BEFORE PUBLICATION}
    \label{tab:graph_complexity}
\end{table}
\end{landscape}
\fi
\section{Hybrid Algorithm}\label{app:hybrid}

We also present a hybrid algorithm that combines ideas from \PASMT{} and \FASMT{} to achieve improved round complexity. The algorithm proceeds in two phases.

In the first phase, we run \PASMT{} with $\bH \in \{0,1\}^{n \times b'}$ chosen as a $(d, \Theta(d))$-list disjunct matrix with $b' = O(d \log n)$ columns~\citep{indyk2010efficiently}. A $(d, L)$-list disjunct matrix guarantees that for any $\bk \in \{0,1\}^n$ with $|\bk| \leq d$, one can decode from $\bH^\top \bk$ a candidate set $C \subseteq [n]$ with $|C| \leq L$ such that $\supp(\bk) \subseteq C$.

In the second phase, we run \FASMT{} on each non-empty leaf bin indexed by $\bell \in \{0,1\}^{b'}$. Recall that \FASMT{} processes bins according to a total order to subtract contributions from previously discovered coefficients. Here, we observe that a weaker condition suffices: calculations in a leaf bin $\bell$ only depend on bins $\bell'$ with $\bell' \leq \bell$ under the partial order on binary strings. Unlike the lexicographic ordering used in standard \FASMT{}, the partial order allows bins to be processed in parallel whenever they are incomparable. 

Since the support of each coefficient is contained in a candidate set of size $\Theta(d)$, \FASMT{} recovers the coefficients in each bin $i$ using $O(s_i d \log 1) = O(s_i d)$ queries, where $s_i$ denotes the number of non-zero coefficients in bin $i$. Summing over all bins, the total query complexity of the second phase is $O(sd)$.

\begin{theorem}[Hybrid Algorithm Complexity]
Let $f: \{0,1\}^n \to \bbR$ be an $s$-sparse polynomial of degree at most $d$. The hybrid algorithm recovers all non-zero coefficients using $O(sd \log n)$ queries in $O(d \log n + sd)$ adaptive rounds.
\end{theorem}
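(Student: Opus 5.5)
We analyze the two phases separately and then combine their counts. The first phase is \PASMT{} run with a $(d,\Theta(d))$-list disjunct matrix $\bH \in \{0,1\}^{n \times b'}$, $b' = O(d \log n)$, in place of a $d$-disjunct matrix.

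\textbf{Phase 1.} Correctness of the bin sums does not depend on disjunctness. The inductive argument in the proof of Theorem~\ref{thm:asmt} uses only Lemma~\ref{lem:measurement}, which holds for any fixed matrix, together with pruning under Assumption~\ref{def:affine-slice-non-cancel}. Hence after $b'$ rounds every surviving leaf $\bell \in \{0,1\}^{b'}$ has its correct bin sum $V(\bell)$. There are at most $s$ active bins per round, so Phase 1 uses $O(sb') = O(sd\log n)$ queries in $b' = O(d\log n)$ rounds.

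A leaf may now contain several coefficients. For each such coefficient $\bk$, the list-disjunct decoder applied to $\bell = \bH^\top \bk$ returns a candidate set $C_{\bell}$ with $|C_{\bell}| = O(d)$ and $\supp(\bk) \subseteq C_{\bell}$. All coefficients in the same leaf share the same syndrome, so they share this candidate set.

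\textbf{Phase 2.} Inside leaf $\bell$ we run \FASMT{} with \GBSA{} restricted to the ground set $C_{\bell}$. By Lemma~\ref{lem:gbsa-complexity} with $n$ replaced by $|C_{\bell}| = O(d)$, each coefficient is isolated within $O(d\log(O(d)/d)+d) = O(d)$ tests. The tree below a leaf containing $s_{\bell}$ coefficients therefore has width at most $s_{\bell}$ and depth $O(d)$. This gives $O(s_{\bell} d)$ queries for that leaf and $O(sd)$ queries overall.

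Queries are formed by concatenating the Phase 1 matrix with the new test vectors, as in Lemma~\ref{lem:bin-refine}. The main obstacle is to show that the residual query is correct under a partial order rather than the lexicographic one. Concretely, for the query $\bx = \neg((\bH\|\bH'\|\bh)\cdot\neg(\bell\|\bell''\|0))$ we must show that every undiscovered coefficient satisfying the constraint lies in the current sub-bin.

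To handle this, we take an undiscovered coefficient $\bk'$ in a different leaf $\bell'$ that passes the constraint. Passing the constraint forces $\bH^\top\bk' = \bell' \le \bell$. So if every leaf $\bell' < \bell$ has been fully resolved and subtracted via $f_T$ before leaf $\bell$ is processed, no foreign coefficient survives. Within a single leaf, the lexicographic argument of Lemma~\ref{lem:bin-refine} applies verbatim.

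\textbf{Round count.} Process leaves in waves according to a linear extension of the partial order, running incomparable leaves in parallel. Leaf $\bell$ can start once all leaves below it are finished, so its finishing time is at most the largest total Phase 2 work along a chain of leaves ending at $\bell$. A chain of leaves $\bell_1 < \cdots < \bell_r$ carries total work $\sum_j O(s_{\bell_j} d) \le O(sd)$, since the leaves are disjoint and hold at most $s$ coefficients in total. Hence Phase 2 needs $O(sd)$ adaptive rounds.

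Adding the two phases gives $O(sd\log n + sd) = O(sd\log n)$ queries in $O(d\log n + sd)$ rounds. The step that needs the most care is this scheduling argument: a leaf must not begin until all leaves below it in the partial order are complete, yet the number of rounds must still be bounded by the longest chain rather than by the total number of leaves.
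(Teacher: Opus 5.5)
Your proposal is correct and follows the same two-phase accounting as the paper: Phase 1 costs $O(sb')$ queries in $b'$ rounds, and Phase 2 costs $O(s_{\bell} d)$ queries per leaf, summed to $O(sd)$; you also supply the partial-order residual argument that the paper only asserts before the theorem. The critical-path scheduling is more than the stated bound requires, since the paper simply sums the $O(s_{\bell} d)$ rounds of each leaf, which already gives $O(sd)$ rounds because $\sum_{\bell} s_{\bell} \le s$; your concern about being bounded by the number of leaves rather than the longest chain is therefore unnecessary.
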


\begin{proof}
The first phase runs \PASMT{} to depth $b' = O(d \log n)$, making $O(sb') = O(sd \log n)$ queries in $O(d \log n)$ rounds, since all bins at each depth are processed in parallel. For each leaf bin, the $(d, \Theta(d))$-list disjunct property guarantees a candidate set $C$ with $|C| \leq \Theta(d)$ such that $\supp(\bk) \subseteq C$ for all coefficients $\bk$ in the bin. The second phase runs \FASMT{} on each bin with effective dimension $|C| = \Theta(d)$, making $O(s_i d)$ queries in $O(s_i d)$ rounds per bin. Since $\sum_i s_i = s$, the total for the second phase is $O(sd)$ queries in $O(sd)$ rounds. Combining both phases yields $O(sd \log n)$ queries in $O(d \log n + sd)$ rounds.
\end{proof}

\section{Experiment Details}\label{app:experiment_details}

\subsection{Synthetic Experiment}
For fixed parameters $n$, $d$, and $s$, we generate a random hypergraph by sampling $s$ hyperedges independently. The cardinality of each hyperedge is drawn uniformly from $\{1,\ldots,d\}$, and its support is chosen uniformly at random among all subsets of that cardinality. Duplicate hyperedges are removed. Each remaining hyperedge is assigned an independent weight drawn uniformly from $[1,2]$, ensuring that the corresponding polynomial does not violate Assumption~\ref{def:subset-sum-independent} of Subset Sum Independence as all coefficients are positive. Across all tested regimes, both \PASMT{} and \FASMT{} exhibit scaling consistent with the theoretical predictions. Moreover, the observed runtimes are small even for moderately large problem instances, reflecting the low constant factors afforded by implementations based on practical group-testing frameworks.
\begin{figure}[ht]
    \centering
    \includegraphics[width=0.95\linewidth]{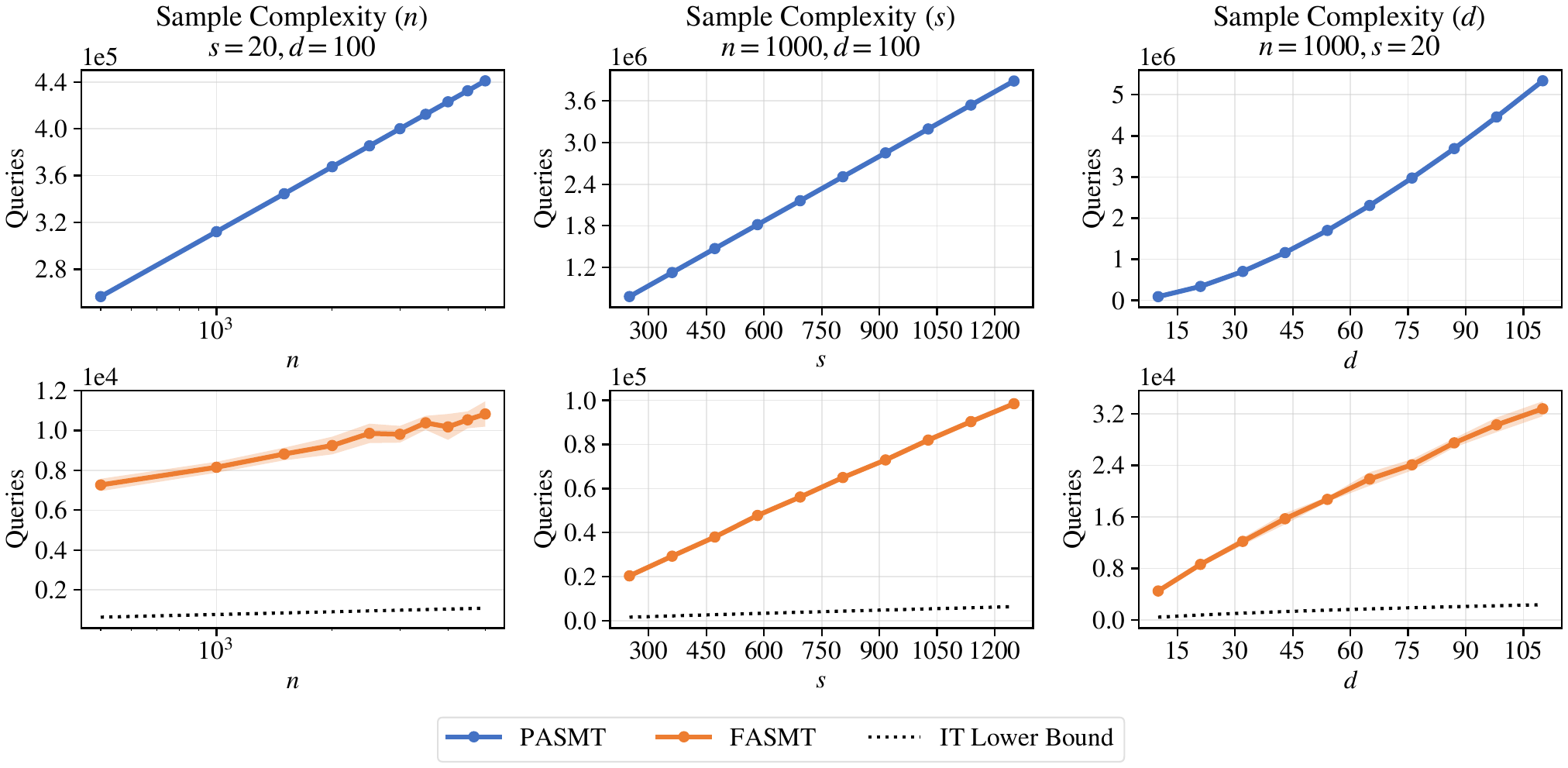}
    \caption{The plots display the number of queries required as a function of $n, s$ and $d$. The top row illustrates the scaling of the PASMT algorithm, while the bottom row illustrates the FASMT algorithm alongside the Information Theoretic lower bound. Each data point averages $10$ samples and shaded region is standard deviation. Note the difference in scale on the $y$-axes between the top and bottom rows.}
    \label{fig:experiments2}
\end{figure}

\subsection{Real-World Datasets}
To rigorously validate \FASMT beyond synthetic models, we evaluate performance on two empirical benchmarks derived from biological and engineering systems. These datasets exhibit complex topological features that are absent from our synthetic models. Crucially, both benchmarks were selected because our theoretical query oracle maps to physically realizable measurements in their respective domains. While we analyze unweighted hypergraphs (assigning unit weights to all edges), we note that \FASMT extends naturally to weighted instances, representing, for example, reaction rates or signal strengths. 

\paragraph{ISCAS-85 Circuits Dataset.} We utilize the ISCAS-85 dataset, a standard benchmark for combinatorial logic circuits. We map these circuits to a hypergraph $H=(V,E)$ where the vertex set $V$ represents the logic gates and input terminals. The hyperedges $E$ represent the interconnects (nets/wires): a single hyperedge connects the output of a source gate to the inputs of all destination gates. 

\paragraph{BiGG Metabolic Pathway Dataset.} We examine metabolic network reconstructions from the BiGG Models database. These networks model the stoichiometry of biochemical reactions within a cell. We construct a hypergraph $H=(V,E)$ where $V$ represents the set of metabolites (chemical compounds). Each hyperedge $e \in E$ represents a single biochemical reaction, connecting all metabolites involved as either reactants or products. We filter these datasets to only include hyperedges representing reactions for which the number of nodes is less than $0.2|V|$.

Notably, these domains cover distinct topological regimes: the ISCAS circuits are characterized by a high density of small hyperedges ($s \approx n, d \le 17$), whereas the BiGG metabolic networks exhibit the inverse structure with a small number of relatively large hyperedges ($s \ll n, d > 100$)

\begin{table}[t]
    \centering
    \caption{\textbf{Dataset Statistics.} Detailed parameters for the real-world hypergraphs. $n$ denotes the number of vertices, $s$ the number of hyperedges, and $d$ the maximum hyperedge size. For BiGG datasets, we filtered out hyperedges exceeding $0.2n$ to focus on local interactions.}
    \label{tab:dataset_stats}
    \begin{tabular}{llrrr}
        \toprule
        \textbf{Dataset} & \textbf{Instance} & \textbf{Vertices ($n$)} & \textbf{Hyperedges ($s$)} & \textbf{Max Deg ($d$)} \\
        \midrule
        \multirow{9}{*}{\textbf{ISCAS-85}} 
        & c499   & 243   & 211   & 13 \\
        & c880   & 443   & 417   & 9  \\
        & c1355  & 587   & 555   & 13 \\
        & c1908  & 913   & 888   & 17 \\
        & c2670  & 1,426 & 1,286 & 12 \\
        & c3540  & 1,719 & 1,697 & 17 \\
        & c5315  & 2,485 & 2,362 & 16 \\
        & c6288  & 2,448 & 2,416 & 17 \\
        & c7552  & 3,719 & 3,611 & 16 \\
        \midrule
        \multirow{9}{*}{\textbf{BiGG}}     
        & iJR904  & 761   & 29 & 143 \\
        & iNJ661  & 825   & 37 & 118 \\
        & iIT341  & 485   & 73 & 75  \\
        & iHN637  & 666   & 64 & 104 \\
        & iAF987  & 1,109 & 30 & 220 \\
        & iMM904  & 1,226 & 59 & 245 \\
        & iPC815  & 1,542 & 53 & 281 \\
        & iAF1260 & 1,668 & 39 & 299 \\
        & iML1515 & 1,877 & 38 & 331 \\
        \bottomrule
    \end{tabular}
\end{table}

\end{document}